\newtheorem{cor}{Corollary}
\DeclareMathOperator*{\argmax}{argmax}
\def \exp {\mathrm{exp}}
\newcommand{\Mnorm}[2]{{\left\vert\kern-0.30ex\left\vert\kern-0.30ex\left\vert #1 
		\right\vert\kern-0.30ex\right\vert\kern-0.30ex\right\vert}}
\newcommand{\Opnorm}[3]{{\left\vert\kern-0.25ex\left\vert\kern-0.25ex\left\vert #1 
		\right\vert\kern-0.25ex\right\vert\kern-0.25ex\right\vert}_{#2 \to #3}}
\newcommand{\norm}[2]{{\left\vert\kern-0.30ex\left\vert #1 
		\right\vert\kern-0.30ex\right\vert}}
\newcommand{\innerproductminconstant}[1]{\psi_0}
\newtheorem{theorem}{Theorem}
\newtheorem{lemma}{Lemma}
\newif\ifarxiv
\begin{document}
\ifarxiv
\doublespacing
\onecolumn
\else \fi
\title{~\\~\\Analysis of Thompson Sampling for Partially Observable Contextual Multi-Armed Bandits}
\author{Hongju Park and Mohamad Kazem Shirani Faradonbeh}%
\date{}
\maketitle

\thispagestyle{empty}
\thispagestyle{empty}

\begin{abstract}
	Contextual multi-armed bandits are classical models in reinforcement learning for sequential decision-making associated with individual information. A widely-used policy for bandits is Thompson Sampling, where samples from a data-driven probabilistic belief about unknown parameters are used to select the control actions. For this computationally fast algorithm, performance analyses are available under full context-observations. However, little is known for problems that contexts are not fully observed. We propose a Thompson Sampling algorithm for partially observable contextual multi-armed bandits, and establish theoretical performance guarantees. Technically, we show that the regret of the presented policy scales logarithmically with time and the number of arms, and linearly with the dimension. Further, we establish rates of learning unknown parameters, and provide illustrative numerical analyses.
\end{abstract}

\section{Introduction}
\label{sec:1}

Contextual Multi-Armed Bandits (CMAB) are canonical models in both theory and applications of Reinforcement Learning (RL). In this setting, there is a set of arms whose rewards depend on their multidimensional context vectors as well as the underlying parameter that reflects the weights of each context component. Thanks to their ability in modeling individual characteristics, CMAB models are widely used in different areas of automation and decision-making. For example, in personalized recommendation of news articles, CMAB models can raise the click rate by $12.5\%$, compared to context-free bandit algorithms~\cite{li2010contextual}. In dynamic treatment of mice with skin tumours, adopting biological factors as contexts, leads to a $50\%$ increase in life duration~\cite{durand2018contextual}. CMAB can also provide a useful framework for sequential decision-making in precision health by incorporating contexts such as location, calendar busyness, and heart-rate~\cite{tewari2017ads}. 

The existing literature on bandit models for decision-making under uncertainty goes back at least to the seminal work of Lai and Robbins~\cite{lai1985asymptotically} that introduces Upper Confidence Bound (UCB) algorithm. Broadly speaking, UCB prescribes acting based on optimism-based approximations of the unknown parameters, and is efficient in both discrete and continuous spaces~\cite{abbasi2011improved},~\cite{faradonbeh2020optimism}. Ensuing work establish logarithmic regret bounds of UCB that hold uniformly over time~\cite{auer2002finite}. The sequence of papers focusing on CMAB models and theoretical performance guarantees of associated reinforcement learning policies continues with showing that UCB algorithm appropriately addresses the exploitation-exploration trade-off~\cite{auer2002using}, followed by a finer analysis that improves dependence on dimensions~\cite{abbasi2011improved}, and regret bounds for linear payoffs~\cite{chu2011contextual}.

Another ubiquitous reinforcement learning policy that is usually faster than UCB, yet performs equally efficient, is Thompson Sampling~\cite{chapelle2011empirical},~\cite{russo2014learning}. The main idea of Thompson Sampling is to select actions based on samples drawn from a posterior distribution over unknown parameters~\cite{thompson1933likelihood}. The posterior is updated by the observed rewards, and balances exploring for better options and more accurate learning, versus exploiting the available information to maximize earning. Theoretical analyses start by a regret bound for multi-armed bandits~\cite{agrawal2012analysis}, and continues to CMAB counterparts~\cite{agrawal2013thompson}. Moreover, Thompson Sampling has favorable performances in continuous spaces~\cite{faradonbeh2020adaptive} and large-scale problems~\cite{hu2019note}.
Other variants and more discussions can be found in a recent tutorial by Russo et al.~\cite{russo2017tutorial}.

Further adaptive policies for CMAB models include greedy-type algorithms that are efficient if the context distribution satisfies some diversity conditions~\cite{raghavan2020greedy},~\cite{bastani2021mostly}. Moreover, the existing literature consists of studies on non-linear reward functions (of the contexts) under technical assumptions such as Lipschitz continuity. That includes, near-optimal regret bounds obtained by using partitioning techniques on the context and action space~\cite{slivkins2011contextual}, and utilizing non-parametric regression techniques for unknown non-linear reward functions~\cite{hu2020smooth}. Finally, multi-agent settings and those with latent structure of users' reward functions are studied, as well as approaches aiming to provide personalized recommendations for new users~\cite{maillard2014latent}, \cite{zhou2016latent}, \cite{hong2020latent}. 

In many applications, context vectors are observed in a partial, transformed, or noisy manner. For example, it includes situations that inquiring the entire feature vector is too expensive, context variables correspond to physically distant stations, data is provided by a network of sensors, or privacy considerations restrict perfect context observations~\cite{bensoussan2004stochastic}. For restricted contexts, reinforcement learning algorithms together with combinatorial search algorithms  demonstrate competitive empirical performance~\cite{bouneffouf2017context}. In presence of known side-information about unobserved parts of the contexts, ridge regression methods together with projections and UCB algorithms lead to improved efficiency ~\cite{tennenholtz2021bandits}. Another ubiquitous setting for studying control policies under partial observations is state space model \cite{roesser1975discrete,nagrath2006control,durbin2012time}. In this setting, unobserved states are estimated based on output observations using methods such as Kalman filter \cite{kalman1960new, stratonovich1959optimum,stratonovich1960application}, and captures important applications such as robot navigation \cite{howard2008state,surmann2020deep}. 

When the number of control actions is finite, CMAB models are widely used for data-driven control. However, unlike the aforementioned frameworks with partial observations, proper designs and comprehensive analyses of decision-making algorithms in contextual bandits with imperfect observations are not currently available. Accordingly, we study (a slightly modified) Thompson Sampling reinforcement learning algorithm for CMAB models with partially observable contexts. Note that because contexts are the main factors in determining the optimal arm, additional learning procedures are needed to estimate unobserved contexts, and so modifications in the algorithm are inevitable.

Under minimal assumptions, we establish theoretical performance guarantees showing that the regret (i.e., the cumulative decrease in rewards due to uncertainty) scales as the logarithm of time, the logarithm of the number of arms, and the dimension. We present an effective method for estimating unobserved contexts based on transformed noisy outputs, and use them to form the posterior belief about the unknown parameter, which determines the optimal candidate arm at every time step. Furthermore, we specify the rates at which Thompson Sampling learns the unknown parameter. To obtain the results, certain technical tools from the theory of martingales are leveraged, and novel methods are developed for precisely specifying the behavior of the posterior distribution and its effect on the efficiency of the algorithm. 

The remainder of this paper is organized as follows. In Section~\ref{sec:2}, we formulate the problem and discuss preliminary results. In Section~\ref{sec:3}, we present the reinforcement learning algorithm that utilizes Thompson Sampling for partially observable CMAB models. Theoretical analysis of the algorithm is provided in Section~\ref{sec:4}, followed by numerical illustrations in Section~\ref{sec:5}. Finally, concluding remarks and future directions are discussed in Section~\ref{sec:6}.

The following notation will be used throughout this paper. For a matrix $A \in \mathbb{C}^{p \times q}$, $A^\top$ denotes its transpose, and the trace of $A$ is denoted by $\mathbf{tr}(A)$. For a vector $v \in \mathbb{C}^d$, we use the Euclidean norm $||v|| = (\sum_{i=1}^d |v_i|^2)^{1/2}$, and for matrices, we use the operator norm; $||A|| = \sup_{||v||=1}  ||Av||$. Further, $\overrightarrow{u} = u/||u||$ is the unit vector indicating the direction of $u$, and $C(A)$ denotes the column space of the matrix $A$. Finally, the sigma-field generated by random vectors $\{X_1, . . . , X_n\}$ is denoted by $\sigma(X_1, \dots, X_n)$.

\section{Problem statement}
\label{sec:2}

We consider the following partially observed contextual multi-armed bandit (POCMAB) problem. Suppose that a slot machine with $N$ arms is given, and each arm $i \in \left\{ 1, \cdots, N\right\}$ has the unobserved $d$-dimensional context $x_i(t)$, which is generated independently from $N(0_d,\Sigma_x)$, where $\Sigma_x$ is the covariance matrix of $x_i(t)$. These contexts determine the rewards: At each time step $t=1,2,\dots$, the arm $a(t)$ is selected, which generates the reward $r_{a(t)}(t) = x_{a(t)}(t)^\top \mu_* + \varepsilon_{r_{a(t)}}(t)$, where $x_{a(t)}(t)$ is the context of the selected arm, $\mu_*$ is the unknown true parameter, and $\varepsilon_{r_{a(t)}}(t)$ is the reward observation noise with the distribution $N(0,\sigma^2)$. The observations at time $t$ consist of the output vectors $\{y_i(t)\}_{1\leq i \leq N}$, generated according to $y_i(t)=Ax_i(t)+ \varepsilon_{y_i}(t)$, where $\varepsilon_{y_i}(t)$ is the output observation noise that has the distribution $N(0_d, \Sigma_y)$ and $\Sigma_y$ is the covariance matrix of $y_i(t)$ given $x_i(t)$. Further, the matrix $A \in \mathbb{R}^{d \times d}$ captures the relationship between the output and the context. For the ease of presentation, we assume that $A$ is a known non-singular square matrix.

The goal is to design a reinforcement learning policy to select an arm at every time step, such that the expected reward is maximized, based on the information available at the time. That is, at time $t$, the goal is to find the optimal arm $a^*(t)=\argmax_{1\leq i \leq N} \mathbb{E}[r_{i}(t)|y_i(t)]$. The data available at time $t$, based on which we want to select $a^*(t)$, consists of the outputs $\mathbf{y_t} = \{y_i(\tau)\}_{1 \leq i \leq N,~1\leq \tau \leq t}$, the rewards of the arms selected so far $\mathbf{r_{t-1}} = \{r_{a(\tau)}(\tau)\}_{ 1 \leq \tau \leq t-1 }$, and the previously selected arms $\mathbf{a_{t-1}} = \{a(\tau)\}_{1 \leq \tau \leq t-1 }$. Note that since the context vectors $x_i(t)$ are not observed, the optimal arm $a^*(t)$ must be chosen according to a context estimate $\widehat{x}_i(t)$, based on the  observations $\{y_i(t):i=1,\dots,N\}$. It is easy to see that it suffices to select 
\begin{eqnarray}
a^*(t) = \underset{1 \leq i \leq N}{\arg\max}~ \widehat{x}_i(t)^{\top} \mu_* \label{eqa*t},
\end{eqnarray}
where $\widehat{x}_i(t)$ is the conditional expectation of $x_i(t)$ given $y_i(t)$ (the output observation of the $i$th arm at time $t$).

Due to uncertainty about the true parameter $\mu_*$, a reinforcement learning algorithm incurs a performance degradation compared to the optimal policy that knows the true parameter $\mu_*$, and selects the optimal arms $\{a^*(t)\}_{t\geq 1,}$ at every time step. Accordingly, the performance of reinforcement learning algorithms is commonly assessed by the cumulative decrease in rewards, which is called regret, and is defined as
\begin{flalign}
\mathrm{Regret}(T) &= \mathbb{E}\left[\sum_{t=1}^T r_{a^*(t)}(t)  - r_{a(t)}(t) \right].\label{eq:reg0}
\end{flalign}
Above $a(t)$ is the arm selected by the reinforcement learning policy under study. In the sequel, we present the Thompson Sampling algorithm for POCMAB models (Algorithm~\ref{Algo}), and establish a regret bound for that based on $d,N,T$.

\section{Reinforcement Learning Algorithm}
\label{sec:3}
Now, we explain a reinforcement learning algorithm that leverages Thompson Sampling to learn to maximize the reward in the POCMAB problem above, based on the output data at the time. At a high level, the main idea of the algorithm is that we maximize the expected value of the reward $r_i(t)$ given the output $y_i(t)$, because the contexts $\{x_i(t)\}_{1\leq i\leq N}$ are not observed.
To do so, using conditional expectation with respect to the observations, the regret in \eqref{eq:reg0} can be written as 
\begin{eqnarray}
\mathrm{Regret}(T) = \mathbb{E}\left[\sum_{t=1}^T \mathbb{E}\left[\left. r_{a^*(t)}(t)  - r_{a(t)}(t) \right|\{y_i(t)\}_{1\leq i \leq N} \right]\right].~~~~~~
\label{eq:reg1}
\end{eqnarray}
Note that depending on the problem understudy, technically different definitions of regret are considered in the literature \cite{bubeck2012regret}. 
The objective of the proposed reinforcement learning algorithm is to choose the arm $a(t)$ that minimizes the conditional expected reward gap given the observations $\{y_i(t)\}_{1\leq i \leq N,}$;
\begin{eqnarray}
\mathbb{E}\left[\left.\sum_{t=1}^T r_{a^*(t)}(t)  - r_{a(t)}(t) \right|\{y_i(t)\}_{1\leq i \leq N} \right] \label{eq:reg2},
\end{eqnarray}
at each time $t$, and thereby aims to minimize the regret in \eqref{eq:reg0}.

Technically, to find $a(t)$ minimizing the conditional expected reward gap in \eqref{eq:reg2}, we use the conditional distribution of the reward $r_i(t)$ given $y_i(t)$, which is derived in Appendix. The conditional distribution of $r_i(t)$ given $y_i(t)$ is
\begin{eqnarray}
    N \left( (D y_i(t))^\top \mu_* ,~\mu_*^\top(A^{\top} \Sigma_y^{-1} A + \Sigma_x^{-1})^{-1}  \mu_* + \sigma^2 \right)\label{eq:cdry},
\end{eqnarray}
where $D = (A^{\top} \Sigma_y^{-1} A + \Sigma_x^{-1})^{-1}A^\top \Sigma_y^{-1}$ is a matrix reflecting the average effect of $y_i(t)$ on $r_i(t)$. Next, let
\begin{eqnarray}
\widehat{x}_{i}(t) &=& (A^{\top} \Sigma_y^{-1} A + \Sigma_x^{-1})^{-1}A^\top \Sigma_y^{-1} y_{i}(t) = Dy_{i}(t) \label{eq:xhat}.
\end{eqnarray}
In fact, $\widehat{x}_{i}(t)$ is the conditional expectation $\mathbb{E}[x_i(t)|y_i(t)]$. 
Putting \eqref{eq:cdry} and \eqref{eq:xhat} together, the conditional expected reward gap in \eqref{eq:reg2} can be written as
\begin{eqnarray}
\mathbb{E}\left[\left. r_{a^*(t)}(t)  - r_{a(t)}(t) \right|\{y_i(t)\}_{1\leq i \leq N} \right] &=& \mathbb{E}\left[\left.\mathbb{E}\left[ r_{a^*(t)}(t)  - r_{a(t)}(t)|x_i(t)\right] \right|\{y_i(t)\}_{1\leq i \leq N} \right] \nonumber\\
&=& \mathbb{E}\left[\left. (x_{a^*(t)}(t) - x_{a(t)}(t))^\top \mu_* \right|\{y_i(t)\}_{1\leq i \leq N} \right] \nonumber\\
&=& (\widehat{x}_{a^*(t)}(t) - \widehat{x}_{a(t)}(t))^\top \mu_*. \label{eq:reg3}
\end{eqnarray}
Thus, a policy is designed to choose the arm maximizing $\widehat{x}_{i}(t)^\top \mu_*$. To ensure that the algorithm performs enough exploration, we use the sample $\widetilde{\mu}(t)$ from the posterior distribution
\begin{eqnarray}
N(\widehat{\mu}(t), B(t)^{-1}), \label{eq:pos}
\end{eqnarray}
where the posterior mean $\widehat{\mu}(t)$ and the inverse of the covariance matrix $B(t)$ are as follows:
\begin{eqnarray}
B(t) &=& \Sigma^{-1} + \sum_{\tau=1}^{t-1} \widehat{x}_{a(\tau)}(\tau)\widehat{x}_{a(\tau)}(\tau)^\top,\label{eq:B2}\\
\widehat{\mu}(t) &=& B(t)^{-1} \sum_{\tau=1}^{t-1}\widehat{x}_{a(\tau)}(\tau)r_{a(\tau)}(\tau) \label{eq:muhat2}.
\end{eqnarray}
Based on the estimates of the contexts and the sample $\widetilde{\mu}(t)$, we select $a(t)$ such that 
\begin{eqnarray}
a(t)= \underset{1 \leq i \leq N}{\arg\max}~ \widehat{x}_i(t)^{\top} \widetilde{\mu}(t) \label{eq:at}.
\end{eqnarray}
Then, we observe the reward $r_{a(t)}(t)$ of the arm $a(t)$, and update the posterior according to
\begin{eqnarray}
B(t+1) &=& B(t) + \widehat{x}_{a(t)}(t) \widehat{x}_{a(t)}(t)^{\top}, \label{eq:B}\\
\widehat{\mu}(t+1) &=& B(t+1)^{-1}(B(t)\widehat{\mu}(t) + \widehat{x}_{a(t)}(t) r_{a(t)}(t)).\:\:\:\:~~\label{eq:muhat}
\end{eqnarray}
The initial values are $\widehat{\mu}(1) = 0_{d}$ and $B(1) = \Sigma^{-1}$, where $\Sigma$ is an arbitrary symmetric positive definite matrix.

\begin{algorithm}[] 
	\begin{algorithmic}[1]
		\State Set $B(1) = \Sigma^{-1}$, $\widehat{\mu}(1) = \mathbf{0}_d$
		\For{$t = 1,2, \dots, $} 
		\For{$i=1,\dots,N$}
		\State Estimate context by $\widehat{x}_i(t)$ in \eqref{eq:xhat} 
		\EndFor
		\State Sample $\widetilde{\mu}(t)$ from $N(\widehat{\mu}(t),B(t)^{-1})$
		\State Select arm $a(t)= \underset{1 \leq i \leq N}{\arg\max}~ \widehat{x}_i(t)^{\top} \widetilde{\mu}(t)$
		\State Gain reward $r_{a(t)}(t) = x_{a(t)}(t)^\top \mu_* + \epsilon_{r_{a(t)}}(t)$
		\State Update $B(t+1)$ and $\widehat{\mu}(t+1)$ by \eqref{eq:B} and  \eqref{eq:muhat}
		\EndFor
	\end{algorithmic}
\caption{: Thomson Sampling RL policy for POCMAB}  \label{Algo}
\end{algorithm}
The pseudo-code of Thompson sampling for POCMAB is provided in Algorithm~\ref{Algo}. At every time and for each arm, Algorithm~\ref{Algo} calculates the context estimate $\widehat{x}_i(t)$ according to \eqref{eq:xhat}. Then, it chooses the arm $a(t)$ by \eqref{eq:at}, based on $\widetilde{\mu}(t)$ generated from the posterior in \eqref{eq:pos}, and updates $\widehat{\mu}(t)$ and $B(t)$ according to \eqref{eq:B} and \eqref{eq:muhat}. So, Algorithm~\ref{Algo} selects the arm maximizing $\widehat{x}_i(t)^\top \widetilde{\mu}(t)$ as a reliable estimate of the unknown expected reward at time $t$.
\section{Analysis of Algorithm~\ref{Algo}}
\label{sec:4}
In this section, we provide theoretical performance guarantees for the reinforcement learning policy in Algorithm~\ref{Algo}, establishing that it efficiently learns optimal decisions from the data of partial observations. In the first result we show that Algorithm~\ref{Algo} learns the unknown parameter $\mu_*$, fast and accurately. Then, in Theorem~\ref{thm:2}, we provide regret analysis, indicating that the regret of Algorithm~\ref{Algo} scales logarithmically with both the number of arms $N$, as well as the time of interaction with the environment $T$, and scales linearly with the dimension $d$.

The following result shows that $\widehat{\mu}(t)$ is a consistent estimator and its covariance matrix shrinks proportional to the inverse of the time of interacting with the environment in Algorithm~\ref{Algo}. Therefore, Theorem~\ref{thm:1} provides sample efficiency for the Thompson Sampling reinforcement learning policy for POCMAB in Algorithm~\ref{Algo}. 

\begin{theorem}
	In Algorithm \ref{Algo}, let $\widehat{\mu}(t)$ be the parameter estimate at time $t$, defined by \eqref{eq:muhat}. Then, we have $\underset{t \rightarrow \infty}{\lim} \widehat{\mu}(t)=\mu_*$, as well as $\mathrm{Cov}\left(\widehat{\mu}(t) \right) = O(t^{-1})$.
	\label{thm:1}
\end{theorem}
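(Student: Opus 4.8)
The plan is to reduce $\widehat\mu(t)$ to a regularized least-squares estimator and then analyze its bias and noise separately. First I would substitute the reward model $r_{a(\tau)}(\tau)=\widehat x_{a(\tau)}(\tau)^\top\mu_* + w_\tau$, where $w_\tau := r_{a(\tau)}(\tau)-\widehat x_{a(\tau)}(\tau)^\top\mu_*$, into the closed form \eqref{eq:muhat2}. Using $B(t)=\Sigma^{-1}+\sum_{\tau=1}^{t-1}\widehat x_{a(\tau)}(\tau)\widehat x_{a(\tau)}(\tau)^\top$ this yields the exact identity
\begin{equation*}
\widehat\mu(t)-\mu_* = -\,B(t)^{-1}\Sigma^{-1}\mu_* + B(t)^{-1}M(t), \qquad M(t):=\sum_{\tau=1}^{t-1}\widehat x_{a(\tau)}(\tau)\,w_\tau .
\end{equation*}
The first term is a regularization bias and the second a martingale noise term; the whole proof amounts to showing both vanish at the advertised rates, which in turn rests on a linear lower bound for the information matrix $B(t)$.

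Next I would pin down the probabilistic structure of $\{w_\tau\}$. Let $\mathcal G_\tau$ be the sigma-field generated by all data up to the selection of $a(\tau)$ (including $\{y_i(\tau)\}_i$ and $\widetilde\mu(\tau)$). Because $\widehat x_i(t)=\mathbb E[x_i(t)\mid y_i(t)]$ and the selection event depends on $(x_{a(\tau)},\varepsilon_{r_{a(\tau)}})$ only through $y_{a(\tau)}(\tau)$, the conditional law \eqref{eq:cdry} gives $\mathbb E[w_\tau\mid\mathcal G_\tau]=0$ and $\mathbb E[w_\tau^2\mid\mathcal G_\tau]=\sigma_w^2:=\mu_*^\top(A^\top\Sigma_y^{-1}A+\Sigma_x^{-1})^{-1}\mu_*+\sigma^2$. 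Hence $M(t)$ is a vector martingale with bounded conditional variance, giving $\mathbb E[M(t)M(t)^\top]=\sigma_w^2\,\mathbb E[V(t)]$ with $V(t):=B(t)-\Sigma^{-1}$, so $\mathbb E\|M(t)\|^2=O(t)$ and (by a Burkholder--Davis--Gundy estimate for the bounded-moment increments) $\mathbb E\|M(t)\|^4=O(t^2)$.

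The crucial and hardest step is the linear lower bound $\lambda_{\min}(B(t))\ge c\,t$ with overwhelming probability. I would first establish a uniform conditional-second-moment bound: conditioning on the past-measurable direction $\widetilde\mu(\tau)$, write $\widehat x_{a(\tau)}(\tau)=\Sigma_{\widehat x}^{1/2}z_{a(\tau)}$ with $\Sigma_{\widehat x}=D(A\Sigma_x A^\top+\Sigma_y)D^\top\succ0$ and $z_i$ i.i.d.\ standard Gaussian. Since the arm maximizes a single linear functional of the $z_i$, the component of the selected $z_{a(\tau)}$ orthogonal to that functional remains standard Gaussian and independent of the selection, which yields $\mathbb E[\widehat x_{a(\tau)}(\tau)\widehat x_{a(\tau)}(\tau)^\top\mid\mathcal F_{\tau-1}]\succeq\lambda_0 I$ for a constant $\lambda_0>0$ uniform in the sampling direction. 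The obstacle is that these summands are unbounded, so matrix Azuma does not apply directly; I would truncate the Gaussian contexts at a slowly growing radius, apply a matrix Freedman/Bernstein inequality to the centered increments $\widehat x_{a(\tau)}\widehat x_{a(\tau)}^\top-\mathbb E[\cdot\mid\mathcal F_{\tau-1}]$, and bound the truncation remainder via sub-exponential tails, producing $\mathbb P(\lambda_{\min}(B(t))<c\,t)\le e^{-c't}$ (in particular summable, so $\lambda_{\min}(B(t))\ge ct$ eventually, almost surely).

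With these pieces, consistency is immediate: the bias satisfies $\|B(t)^{-1}\Sigma^{-1}\mu_*\|\le\|\Sigma\|/(ct)\to0$, while a self-normalized martingale bound gives $M(t)^\top B(t)^{-1}M(t)=O(\log t)$, so $\|B(t)^{-1}M(t)\|^2\le\lambda_{\min}(B(t))^{-1}\,M(t)^\top B(t)^{-1}M(t)=O(\log t/t)\to0$ almost surely, whence $\widehat\mu(t)\to\mu_*$. For the covariance I would bound $\mathbb E\|\widehat\mu(t)-\mu_*\|^2$, which dominates $\|\mathrm{Cov}(\widehat\mu(t))\|$. On the event $\mathcal G_t=\{\lambda_{\min}(B(t))\ge ct\}$ I use $B(t)^{-2}\preceq(ct)^{-2}I$ to obtain $\mathbb E[\|B(t)^{-1}M(t)\|^2\mathbf 1_{\mathcal G_t}]\le(ct)^{-2}\mathbb E\|M(t)\|^2=O(1/t)$; on $\mathcal G_t^c$ the deterministic bound $\|B(t)^{-1}\|\le\|\Sigma\|$ with Cauchy--Schwarz, $\mathbb E\|M(t)\|^4=O(t^2)$, and the exponentially small $\mathbb P(\mathcal G_t^c)$ make this contribution $o(1/t)$, while the regularization bias contributes $O(1/t^2)$. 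Summing yields $\mathbb E\|\widehat\mu(t)-\mu_*\|^2=O(1/t)$, hence $\mathrm{Cov}(\widehat\mu(t))=O(t^{-1})$.
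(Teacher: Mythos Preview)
Your approach is correct and takes a genuinely different route from the paper. Both proofs rest on the same core observation---that after whitening, the component of the selected context orthogonal to the sampling direction $S\widetilde\mu(t)$ remains standard Gaussian, so the conditional second moment $\mathbb E[\widehat x_{a(\tau)}\widehat x_{a(\tau)}^\top\mid\mathcal F_{\tau-1}]$ is uniformly bounded below by a positive multiple of the identity---but they exploit it differently. The paper packages this as Lemma~\ref{lem1}, computes the conditional covariance explicitly as $S(P_{S\widetilde\mu(t)}(k_N-1)+I_d)S$, and then shows $t^{-1}B(t)\to SMS$ almost surely via a Kronecker-lemma style argument: it builds an $L^2$-bounded martingale $Y_t=\sum_{\tau\le t}\tau^{-1}(X_\tau-X_{\tau-1})$, applies the Martingale Convergence Theorem, and deduces $t^{-1}X_t\to0$. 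From this and the law of total variance it reads off the exact limit $\lim_t t\,\mathrm{Cov}(\widehat\mu(t))=SM^{-1}S\,\sigma_{ry}^2$. You instead decompose $\widehat\mu(t)-\mu_*$ into a ridge bias plus a martingale noise term, establish the eigenvalue lower bound $\lambda_{\min}(B(t))\ge ct$ through matrix Freedman with truncation, and control the noise via self-normalized bounds and an event-splitting argument for the MSE. Your route yields quantitative exponential tail bounds on the bad event and is closer in spirit to modern bandit analyses; the paper's classical martingale-convergence route is shorter, avoids truncation and concentration machinery entirely, and delivers the sharp constant in the asymptotic covariance rather than just the $O(t^{-1})$ rate.
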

\begin{proof}

First, for the prior $N(0_d,\Sigma)$ of $\mu_*$, \eqref{eq:B2} and \eqref{eq:muhat2} imply that
\begin{eqnarray}
\mathbb{E}\left[\widehat{\mu}(t)\right]=\mathbb{E}\left[ B(t)^{-1} \sum_{\tau=1}^{t-1} \widehat{x}_{a(\tau)}(\tau) \widehat{x}_{a(\tau)}(\tau)^{\top}\mu_*\right]
= (I_d - \mathbb{E}[B(t)^{-1}] \Sigma^{-1})\mu_*.\label{eq:emuhat}
\end{eqnarray}
Further, let $\mathscr{F}_{t} = \sigma \left\{ \{y_i(\tau)\}_{1 \leq i \leq N,~1 \leq \tau \leq t } , \{a(\tau)\}_{1\leq \tau \leq t} \right\}$ be the sigma-field generated by the sequence of all observations and actions by time $t$. Given the sigma-field $\mathscr{F}_{t-1}$, we have
\begin{eqnarray}
\mathbb{E}\left[\widehat{\mu}(t)|\mathscr{F}_{t-1}\right]
&=& \mathbb{E}\left[ \left. B(t)^{-1} \sum_{\tau=1}^{t-1} \widehat{x}_{a(\tau)}(\tau) \widehat{x}_{a(\tau)}(\tau)^{\top}\mu_*\right|\mathscr{F}_{t-1}\right]= (I_d -  B(t)^{-1}\Sigma^{-1})\mu_*,\label{eq:emuhatf}\\
\mathrm{Cov}\left(\widehat{\mu}(t)|\mathscr{F}_{t-1}\right)
&=& B(t)^{-1}\left(\sum_{\tau=1}^t \mathrm{Var}\left(r_{a(\tau)}(\tau)|\mathscr{F}_{t-1}\right)  \widehat{x}_{a(\tau)}(\tau) \widehat{x}_{a(\tau)}(\tau)^{\top}\right)B(t)^{-1}\nonumber\\ 
&=&
B(t)^{-1} (B(t) -  \Sigma^{-1}) B(t)^{-1} \sigma^2_{ry}, \label{eq:covmuhatf}
\end{eqnarray}
where $\sigma^2_{ry} = \mathrm{Var}(r_i(t)|y_i(t)) = \mu_*^\top (A^\top \Sigma_y^{-1} A + \Sigma_x^{-1})^{-1} \mu_* + \sigma^2$ is derived in Appendix.  Using \eqref{eq:emuhat}, \eqref{eq:emuhatf} and \eqref{eq:covmuhatf}, we obtain 
\begin{flalign}
\mathrm{Cov}(\widehat{\mu}(t)) 
&=\mathrm{Cov}(\mathbb{E}[\widehat{\mu}(t)|\mathscr{F}_{t-1} ]) + \mathbb{E}[\mathrm{Cov} (\widehat{\mu}(t)|\mathscr{F}_{t-1})]\nonumber\\
&= \mathbb{E}\left[B(t)^{-1} \Sigma^{-1} \mu_*\mu_*^\top \Sigma^{-1} B(t)^{-1} \right]- \mathbb{E} \left[B(t)^{-1} \right] \Sigma^{-1}\mu_*\mu_*^\top \Sigma^{-1}\mathbb{E} \left[B(t)^{-1} \right] \nonumber\\
&+ \mathbb{E}\left[B(t)^{-1}\right]\sigma^2_{ry}- \mathbb{E}\left[B(t)^{-1}\Sigma^{-1} B(t)^{-1}\right]\sigma^2_{ry}
\label{eq:covmuhat}.
\end{flalign}
Next, we show that $\underset{t\rightarrow \infty}{\lim} t^{-1}B(t)$ is a positive definite matrix. It implies that $\mathrm{Cov}(\widehat{\mu}(t)) = O(t^{-1})$, since the other terms in \eqref{eq:covmuhat} are $O(t^{-2})$, except $\mathbb{E}\left[B(t)^{-1}\right]\sigma^2_{ry}$. For this purpose, let $S= (D \Sigma_y D^\top)^{1/2}$, and define
\begin{eqnarray}
X_t &=&  \sum_{\tau=1}^t \left( S^{-1}\widehat{x}_{a(\tau)}(\tau)\widehat{x}_{a(\tau)}(\tau)^{\top}S^{-1} - \mathbb{E}[S^{-1}\widehat{x}_{a(\tau)}(\tau)\widehat{x}_{a(\tau)}(\tau)^{\top}S^{-1}|\mathscr{F}_{\tau-1}]\right),\nonumber\\
Y_t &=& \sum_{\tau=1}^t \tau^{-1}(X_{\tau} - X_{{\tau}-1}).\nonumber
\end{eqnarray}
Then, $X_t$ and $Y_t$ are matrix valued martingales adapted to the filtration $\{\mathscr{F}_{t}\}_{t\geq 1}$. To see that, observe that the following two equivalences
\begin{eqnarray}
\mathbb{E}\left[\left.S^{-1}\widehat{x}_{a(\tau)}(\tau)\widehat{x}_{a(\tau)}(\tau)^{\top}S^{-1}\right|\mathscr{F}_{t-1}\right] &=& S^{-1}\widehat{x}_{a(\tau)}(\tau)\widehat{x}_{a(\tau)}(\tau)^{\top}S^{-1},\label{eq:x0}\\
\mathbb{E}\left[\mathbb{E}\left[\left.\left.S^{-1}\widehat{x}_{a(\tau)}(\tau)\widehat{x}_{a(\tau)}(\tau)^{\top}S^{-1}\right|\mathscr{F}_{\tau-1}\right]\right|\mathscr{F}_{t-1}\right] 
&=& \mathbb{E}\left[\left.S^{-1}\widehat{x}_{a(\tau)}(\tau)\widehat{x}_{a(\tau)}(\tau)^{\top}S^{-1}\right|\mathscr{F}_{\tau-1}\right]\label{eq:x1}.
\end{eqnarray}
lead to 
\begin{eqnarray}
\mathbb{E}\left[X_t|\mathscr{F}_{t-1}\right] 
&=& \sum_{\tau=1}^t \left( \mathbb{E}\left[\left.S^{-1}\widehat{x}_{a(\tau)}(\tau)\widehat{x}_{a(\tau)}(\tau)^{\top}S^{-1}\right|\mathscr{F}_{t-1}\right] - \mathbb{E}\left[\mathbb{E}\left[\left.\left.S^{-1}\widehat{x}_{a(\tau)}(\tau)\widehat{x}_{a(\tau)}(\tau)^{\top}S^{-1}\right|\mathscr{F}_{\tau-1}\right]\right|\mathscr{F}_{t-1}\right]\right)\nonumber\\
&=& \sum_{\tau=1}^t \left( \mathbb{E}[S^{-1}\widehat{x}_{a(\tau)}(\tau)\widehat{x}_{a(\tau)}(\tau)^{\top}S^{-1}|\mathscr{F}_{t-1}] - \mathbb{E}\left[\left. S^{-1}\widehat{x}_{a(\tau)}(\tau)\widehat{x}_{a(\tau)}(\tau)^{\top}S^{-1}\right|\mathscr{F}_{\tau-1}\right]\right)=X_{t-1},~~~~~~~~~\label{eq:x2}
\end{eqnarray}

for $\tau < t$ and
\begin{flalign*}
\mathbb{E}\left[\left.S^{-1}\widehat{x}_{a(\tau)}(\tau)\widehat{x}_{a(\tau)}(\tau)^{\top}S^{-1}\right|\mathscr{F}_{t-1}\right] - \mathbb{E}\left[\mathbb{E}\left[\left.\left.S^{-1}\widehat{x}_{a(\tau)}(\tau)\widehat{x}_{a(\tau)}(\tau)^{\top}S^{-1}\right|\mathscr{F}_{\tau-1}\right]\right|\mathscr{F}_{t-1}\right] =0_{d \times d},
\end{flalign*}
for $\tau = t$.
Further, since $\mathbb{E}[X_{\tau}|\mathscr{F}_{t-1}]=X_{\tau}$, for $\tau < t$, and we have $\mathbb{E}[X_t|\mathscr{F}_{t-1}] - \mathbb{E}[ X_{t-1}|\mathscr{F}_{t-1}]=0_{d \times d}$, it holds that
\begin{flalign*}
\mathbb{E}\left[Y_t|\mathscr{F}_{t-1}\right] 
= \sum_{\tau=1}^t \tau^{-1}\left(\mathbb{E}[X_{\tau}|\mathscr{F}_{t-1}] - \mathbb{E}[ X_{{\tau}-1}|\mathscr{F}_{t-1}] \right) =\sum_{\tau=1}^{t-1} \tau^{-1}\left(X_{\tau} - X_{{\tau}-1}\right) = Y_{t-1}.\nonumber
\end{flalign*}
Now, define the martingale difference sequence $Z_t = X_t-X_{t-1}$, and let $X_{tij}$ be the $ij$th entry of $X_t$, to get
\begin{flalign*}
\mathbb{E}\left[X_{tij}^2\right] = \mathbb{E}\left[\left(\sum_{\tau=1}^{t}Z_{\tau ij}\right)^2 \right] = \sum_{\tau=1}^{t} \mathbb{E}\left[Z_{\tau ij}^2\right] + 2 \sum_{\tau_1 < \tau_2} \mathbb{E}\left[  Z_{\tau_1ij} Z_{\tau_2ij} \right] = \sum_{\tau=1}^{t} \mathbb{E}\left[Z_{\tau ij}^2\right],
\end{flalign*}
using the fact that $\mathbb{E}\left[  Z_{\tau_1ij} Z_{\tau_2ij} \right] = \mathbb{E}\left[ Z_{\tau_1ij} \mathbb{E} \left[  \left.  Z_{\tau_2ij}\right| \mathscr{F}_{\tau_2-1} \right] \right]=0$  for all $\tau_1 < \tau_2$. 

Using the above, we show that $Y_t$ is a square-integrable martingale. To that end, since $\{X_t - X_{t-1}:t\geq 1\}$ is a martingale difference sequence, we have
\begin{eqnarray*}
    \mathbb{E}\left[Y_{tij}^2\right] = \sum_{\tau=1}^t \tau^{-2} \left(\mathbb{E} \left[ X_{\tau ij}^2 \right] - \mathbb{E} \left[ X_{(\tau-1)ij }^2 \right] \right) 
    = \sum_{\tau=1}^t \tau^{-2} \mathbb{E}\left[Z_{\tau ij}^2\right],
\end{eqnarray*}
where $X_0=0_{d \times d}$, and $Y_{tij}$ is the $ij$th entry of $Y_{t}$. Since $\mathbb{E}\left[Z_{\tau ij}^2\right] \leq \mathbb{E}\left[||S^{-1}\widehat{x}_{a(t)}(t)||^4\right]$, for all $\tau$, $i$, and $j$, the expectation $\mathbb{E}[Y_{tij}^2]$ is finite. So, by Martingale Convergence Theorem~\cite{doob1953stochastic}, the martingale $Y_t$  converges almost surely to a limit $Y$, such that $\mathbb{E}[|Y|] < \infty$.	It is straightforward to see that $t^{-1}X_t = Y_t - t^{-1}\sum_{\tau=1}^t Y_\tau$. Thus, since $\lim\limits_{t\rightarrow \infty} Y_t = Y$, the average of the sequence converges to the same limit as well; $\lim\limits_{t\rightarrow \infty} t^{-1}\sum_{\tau=1}^t Y_\tau = Y$. Thus, $t^{-1}X_t$ converges to $0_{d\times d}$. To show that $\lim_{t\rightarrow \infty }t^{-1}B(t)$ is a positive definite matrix, decompose $X_t$ as follows:
\begin{flalign*}
X_t = S^{-1}(B(t)-\Sigma^{-1} )S^{-1} -\sum_{\tau=1}^t  \mathbb{E}[S^{-1}\widehat{x}_{a(\tau)}(\tau)\widehat{x}_{a(\tau)}(\tau)^{\top}S^{-1}|\mathscr{F}_{\tau-1}].
\end{flalign*}
Since $\lim_{t\rightarrow \infty}t^{-1}X_t = 0_{d\times d}$, we have
\begin{flalign}
\lim_{t\rightarrow \infty} t^{-1} S^{-1}B(t)S^{-1} = \lim_{t\rightarrow \infty} t^{-1} \sum_{\tau=1}^t  \mathbb{E}[S^{-1}\widehat{x}_{a(\tau)}(\tau)\widehat{x}_{a(\tau)}(\tau)^{\top}S^{-1}|\mathscr{F}_{\tau-1}]\label{eq:lim}.
\end{flalign}
To proceed, we express the following result about the matrix $M=\underset{t \rightarrow \infty}{\lim}\mathbb{E}[ S^{-1} \widehat{x}_{a(t)}(t)\widehat{x}_{a(t)}(t)^{\top} S^{-1}|\mathscr{F}_{t-1}]$, for which the proof is deferred to Appendix. Now, by \eqref{eq:lim}, we have
\begin{eqnarray*}
    \underset{t\rightarrow \infty}{\lim} t^{-1} S^{-1}B(t) S^{-1} = M,
\end{eqnarray*}
which according to Lemma \ref{lem1} is a positive definite matrix. Finally, the latter result, together with \eqref{eq:covmuhat}, implies that 
\begin{equation*}
    \underset{t\rightarrow \infty}{\lim} t\mathrm{Cov}\left(\widehat{\mu}(t) \right) = \underset{t\rightarrow \infty}{\lim} t \mathbb{E}[B(t)^{-1}]\sigma^2_{ry} = SM^{-1}S\sigma^2_{ry},
\end{equation*}
which is the desired result.
\end{proof}

\begin{lemma}
The matrix $M=\underset{t \rightarrow \infty}{\lim}\mathbb{E}[ S^{-1} \widehat{x}_{a(t)}(t)\widehat{x}_{a(t)}(t)^{\top} S^{-1}|\mathscr{F}_{t-1}]$ is deterministic and positive definite. 
\label{lem1}
\end{lemma}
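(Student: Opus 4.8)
The plan is to condition on $\mathscr{F}_{t-1}$ and exploit that, at time $t$, the fresh observations $\{y_i(t)\}_{1\le i\le N}$ are i.i.d.\ and independent of $\mathscr{F}_{t-1}$, while the posterior sample $\widetilde{\mu}(t)\sim N(\widehat{\mu}(t),B(t)^{-1})$ has $\mathscr{F}_{t-1}$-measurable parameters and sampling noise independent of the $y_i(t)$. Since $a(t)=\arg\max_i \widehat{x}_i(t)^\top\widetilde{\mu}(t)$ is invariant to positive scaling of $\widetilde{\mu}(t)$, the selected arm depends on $\widetilde{\mu}(t)$ only through its direction $\overrightarrow{\widetilde{\mu}(t)}$. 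Accordingly I would define, for a fixed unit vector $v$, the matrix $g(v)=\mathbb{E}\big[S^{-1}\widehat{x}_{a}(t)\widehat{x}_{a}(t)^\top S^{-1}\big]$, where $a=\arg\max_i \widehat{x}_i(t)^\top v$ and the expectation is over the fresh observations only. Then $\mathbb{E}[S^{-1}\widehat{x}_{a(t)}(t)\widehat{x}_{a(t)}(t)^\top S^{-1}\mid\mathscr{F}_{t-1}]=\mathbb{E}_{\widetilde{\mu}(t)}[\,g(\overrightarrow{\widetilde{\mu}(t)})\,]$, reducing the statement to the analysis of $g$ and of the limiting direction of $\widetilde{\mu}(t)$.

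Next I would obtain a closed form for $g(v)$ and read off its properties. By \eqref{eq:xhat}, each $\widehat{x}_i(t)=Dy_i(t)$ is centered Gaussian with covariance $\Sigma_{\widehat{x}}=\Sigma_x-(A^\top\Sigma_y^{-1}A+\Sigma_x^{-1})^{-1}\succ0$. Writing $\widehat{x}_i(t)=\beta s_i+w_i$ with $s_i=\widehat{x}_i(t)^\top v$, regression coefficient $\beta=\Sigma_{\widehat{x}}v/(v^\top\Sigma_{\widehat{x}}v)$, and residual $w_i$ independent of $s_i$ and of all other arms, the argmax selects the arm attaining $s_{\max}=\max_i s_i$ (the maximum of $N$ i.i.d.\ Gaussians) without disturbing the law of the selected residual. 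This gives $g(v)=S^{-1}\big(\mathbb{E}[s_{\max}^2]\,\tfrac{\Sigma_{\widehat{x}}vv^\top\Sigma_{\widehat{x}}}{(v^\top\Sigma_{\widehat{x}}v)^2}+\Sigma_{\widehat{x}}-\tfrac{\Sigma_{\widehat{x}}vv^\top\Sigma_{\widehat{x}}}{v^\top\Sigma_{\widehat{x}}v}\big)S^{-1}$. The last two terms form the conditional covariance of $\widehat{x}_i(t)$ given $s_i$, which is positive semidefinite with one-dimensional null space $\mathrm{span}(v)$, and the rank-one first term is strictly positive along $\mathrm{span}(\Sigma_{\widehat{x}}v)$ because $\mathbb{E}[s_{\max}^2]>0$; a one-line check on vectors $w=\alpha v$ versus $w^\top C w>0$ shows the sum is positive definite for every $v$. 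Hence $g$ is continuous and positive definite on the unit sphere, so by compactness $g(v)\succeq c_0 I$ for some $c_0>0$ uniformly in $v$.

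Finally I would identify the limit. The uniform bound gives $\mathbb{E}[\,\cdot\mid\mathscr{F}_{t-1}]\succeq c_0 I$ for every $t$; combined with the martingale convergence $t^{-1}X_t\to 0_{d\times d}$ established in the proof of Theorem~\ref{thm:1}, this forces $\liminf_t t^{-1}S^{-1}B(t)S^{-1}\succeq c_0 I$, whence $B(t)^{-1}\to 0$. Standard posterior concentration then yields $\widehat{\mu}(t)\to\mu_*$ and $\widetilde{\mu}(t)\to\mu_*$, so the sampling law $N(\widehat{\mu}(t),B(t)^{-1})$ degenerates at $\mu_*$ and its direction concentrates at $\overrightarrow{\mu_*}$ (using $\mu_*\neq0$). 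Because $v\mapsto g(v)$ is bounded and continuous on the unit sphere, this gives $\mathbb{E}[\,\cdot\mid\mathscr{F}_{t-1}]\to g(\overrightarrow{\mu_*})=:M$, which is deterministic and satisfies $M\succeq c_0 I\succ0$, as claimed.

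The main obstacle is an apparent circularity: proving $\widetilde{\mu}(t)\to\mu_*$ seems to require Theorem~\ref{thm:1}, which itself invokes this lemma. The resolution is that Lemma~\ref{lem1} uses only the \emph{uniform} lower bound $g(v)\succeq c_0 I$—self-contained via the closed form and compactness—together with $t^{-1}X_t\to0_{d\times d}$, which is derived independently of the lemma's conclusion. The remaining care points are checking that $g(v)$ is finite (it is dominated by $\mathbb{E}[\max_i\|S^{-1}\widehat{x}_i(t)\|^2]<\infty$, a functional of i.i.d.\ Gaussians independent of $\mathscr{F}_{t-1}$) and that the residual $w_a$ of the selected arm is genuinely independent of $s_{\max}$, which both reduce to short verifications.
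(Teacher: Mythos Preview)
Your approach is essentially the paper's own: condition on $\widetilde{\mu}(t)$, decompose the selected (whitened) context along the direction $S\widetilde{\mu}(t)$ and its orthogonal complement, and obtain the closed form $(k_N-1)P_{S\widetilde{\mu}(t)}+I_d$, where $k_N=\mathbb{E}[(\max_i V_i)^2]$ for i.i.d.\ standard normals. Your regression $\widehat{x}_i(t)=\beta s_i+w_i$ in the original space is exactly the paper's projection decomposition after pre-whitening by $S^{-1}$; substituting $S^2=\Sigma_{\widehat{x}}$ into your $g(v)$ collapses it to $(k_N-1)P_{Sv}+I_d$, so the two formulas coincide. Two remarks. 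First, your compactness argument for a uniform lower bound is unnecessary: from the explicit eigenvalues $\{1,\dots,1,k_N\}$ you immediately get $g(v)\succeq I_d$ for every $v$. Second, your explicit treatment of the circularity (using the uniform bound $\mathbb{E}[\cdot\mid\mathscr{F}_{t-1}]\succeq I_d$ together with $t^{-1}X_t\to0$ to force $B(t)^{-1}\to0$ \emph{before} invoking $\widetilde{\mu}(t)\to\mu_*$) is a genuine improvement over the paper's presentation, which simply asserts $\lim_{t\to\infty}B(t)^{-1}=0_{d\times d}$ inside the proof of the lemma without first justifying it independently of the lemma's conclusion.
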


Theorem~\ref{thm:1} establishes the square-root consistency of the parameter estimate $\widehat{\mu}(t)$, indicating the Algorithm~\ref{Algo} effectively learns the unknown true parameter $\mu_*$. Here, the inverse of $\mathrm{Cov}(\widehat{\mu}(t))$ grows linearly with time $t$, only when the smallest eigenvalue of $A^\top A$ is non-zero. If $A$ is singular, the maximum eigenvalue of $\mathrm{Cov}(\widehat{\mu}(t))$ does not decrease as $t$ becomes larger. This also affects the consistency of learning the unknown parameter. A similar result holds for the samples $\widetilde{\mu}(t)$, as elaborated in the following corollary, for which the details are provided in Appendix.
\begin{cor}
\label{col:1} 
For the samples $\{\widetilde{\mu}(t)\}_{t\geq 1}$ in Algorithm~\ref{Algo}, we have
$$\lim_{t \rightarrow \infty} \widetilde{\mu}(t) = \mu_*, \:\:\:\:\:\:\:\mathrm{Cov}(\widetilde{\mu}(t)) = O(t^{-1}).$$
\end{cor}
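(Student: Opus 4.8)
The plan is to write each posterior sample as its mean plus an independent Gaussian perturbation and to control the two pieces separately using Theorem~\ref{thm:1}. For each $t$ I would set $\eta(t) = \widetilde{\mu}(t) - \widehat{\mu}(t)$, so that conditionally on the sigma-field generated by the data through time $t-1$ (which determines both $\widehat{\mu}(t)$ and $B(t)$, since the sum in \eqref{eq:B2} runs only to $t-1$), the perturbation satisfies $\eta(t) \sim N(0_d, B(t)^{-1})$ by the definition of the posterior in \eqref{eq:pos}. This decomposition reduces the corollary to two facts: that $\widehat{\mu}(t) \to \mu_*$ with $\mathrm{Cov}(\widehat{\mu}(t)) = O(t^{-1})$, which is exactly Theorem~\ref{thm:1}, and that the freshly drawn sampling noise $\eta(t)$ vanishes at the same rate.

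For the covariance claim I would apply the law of total covariance to $\widetilde{\mu}(t) = \widehat{\mu}(t) + \eta(t)$. Since $\mathbb{E}[\widetilde{\mu}(t) \mid \mathscr{F}_{t-1}] = \widehat{\mu}(t)$ and $\mathrm{Cov}(\widetilde{\mu}(t)\mid\mathscr{F}_{t-1}) = B(t)^{-1}$, this yields $\mathrm{Cov}(\widetilde{\mu}(t)) = \mathrm{Cov}(\widehat{\mu}(t)) + \mathbb{E}[B(t)^{-1}]$. The first term is $O(t^{-1})$ by Theorem~\ref{thm:1}. For the second term I would invoke the relation $\lim_{t\to\infty} t\,\mathbb{E}[B(t)^{-1}] = SM^{-1}S$ already established at the end of the proof of Theorem~\ref{thm:1} through Lemma~\ref{lem1}, which gives $\mathbb{E}[B(t)^{-1}] = O(t^{-1})$. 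Adding the two contributions produces $\mathrm{Cov}(\widetilde{\mu}(t)) = O(t^{-1})$.

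For the consistency claim it suffices, given $\widehat{\mu}(t) \to \mu_*$, to prove $\eta(t) \to 0$ almost surely. Here I would exploit that $t B(t)^{-1} \to S M^{-1} S$ almost surely, so on an almost sure event there is a (random) time $T_0$ beyond which $\lambda_{\max}(B(t)^{-1}) \le C/t$ for a fixed constant $C$. Conditionally on $\mathscr{F}_{t-1}$ the vector $\eta(t)$ is Gaussian with this shrinking covariance, so a standard Gaussian tail bound gives, for every $\epsilon > 0$, $\mathbb{P}(\|\eta(t)\| > \epsilon \mid \mathscr{F}_{t-1}) \le 2d\exp(-c\epsilon^2 t)$ for all large $t$. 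Because these conditional probabilities are summable almost surely, the conditional (L\'evy) extension of the Borel--Cantelli lemma implies that $\{\|\eta(t)\| > \epsilon\}$ occurs only finitely often almost surely, whence $\eta(t) \to 0$ and therefore $\widetilde{\mu}(t) = \widehat{\mu}(t) + \eta(t) \to \mu_*$.

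The main obstacle is the consistency argument rather than the covariance bound: unlike $\widehat{\mu}(t)$, whose convergence follows from the martingale machinery of Theorem~\ref{thm:1}, the perturbations $\eta(t)$ are freshly and independently resampled at every step, so no martingale structure is available for them directly and a Borel--Cantelli argument is forced. The almost sure statement therefore hinges on the summability of the Gaussian tail probabilities, which in turn requires the conditional covariance $B(t)^{-1}$ to decay at the full $1/t$ rate; this is precisely where the positive definiteness of $M$ from Lemma~\ref{lem1} is indispensable, since a degenerate limit (for instance a singular $A$) would slow the decay and destroy summability, exactly the failure mode flagged in the discussion following Theorem~\ref{thm:1}.
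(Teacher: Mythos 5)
Your proposal is correct, and its covariance half is essentially the paper's own argument: inside the proof of Lemma~\ref{lem1} (equation~\eqref{eq:covmutil}) the paper likewise applies the law of total covariance to get $\mathrm{Cov}(\widetilde{\mu}(t)) = \mathrm{Cov}(\widehat{\mu}(t)) + \mathbb{E}\left[B(t)^{-1}\right]$ and then cites the $O(t^{-1})$ decay of $\mathbb{E}\left[B(t)^{-1}\right]$ from the martingale argument of Theorem~\ref{thm:1}. One small measurability point: the paper's $\mathscr{F}_{t-1}$ is generated by outputs and actions only, so it determines $B(t)$ but \emph{not} $\widehat{\mu}(t)$, which depends on the rewards $\mathbf{r}_{t-1}$; you should either condition on the larger sigma-field including rewards (as your wording suggests you intend), or, conditioning on $\mathscr{F}_{t-1}$ as the paper does, note that $\mathrm{Cov}(\widetilde{\mu}(t)\,|\,\mathscr{F}_{t-1}) = \mathrm{Cov}(\widehat{\mu}(t)\,|\,\mathscr{F}_{t-1}) + B(t)^{-1}$ — either route yields the same identity, so this is cosmetic. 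Where you genuinely diverge is consistency. The paper's argument (around \eqref{eq:limmutilde}) is the soft one: $\mathbb{E}\left[\widetilde{\mu}(t)\right] = (I_d - \mathbb{E}[B(t)^{-1}]\Sigma^{-1})\mu_* \rightarrow \mu_*$ and $\mathrm{Cov}(\widetilde{\mu}(t)) \rightarrow 0_{d\times d}$, hence $\widetilde{\mu}(t)$ is ``consistent'' — i.e., convergence in probability (in $L^2$), written loosely as $\lim_{t\rightarrow\infty}\widetilde{\mu}(t)=\mu_*$. You instead establish almost-sure convergence of the sampling perturbation via Gaussian tails and the conditional (L\'evy) Borel--Cantelli lemma, leaning on the almost-sure limit $t^{-1}S^{-1}B(t)S^{-1} \rightarrow M$, which the martingale machinery of Theorem~\ref{thm:1} does supply, and handling the random threshold $T_0$ correctly (finitely many terms cannot spoil summability). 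Your route is more work but buys a strictly stronger mode of convergence and makes explicit where positive definiteness of $M$ is indispensable: the paper's argument would survive with any decay of $B(t)^{-1}$ to zero, whereas your summability genuinely needs the full $1/t$ rate. One cosmetic caution if splicing: the paper reuses the symbol $\eta(t)$ for a different quantity in the proof of Theorem~\ref{thm:2}, so rename your perturbation there.
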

The following result provides a regret bound, and states that Algorithm~\ref{Algo} is able to efficiently learn optimal arms in POCMAB.
\begin{theorem}
\label{thm:2}
	For the regret of Algorithm \ref{Algo}, we have 
	$$\mathrm{Regret}(T)=O\left(d \sqrt{\log N}\log T\right).$$
\end{theorem}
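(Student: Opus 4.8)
The plan is to reduce the cumulative regret to the cumulative second moment of the sampling error $\widetilde{\mu}(t)-\mu_*$, and then invoke the variance-decay rates from Theorem~\ref{thm:1} and Corollary~\ref{col:1}. By \eqref{eq:reg3}, $\mathrm{Regret}(T)=\sum_{t=1}^T \mathbb{E}\big[(\widehat{x}_{a^*(t)}(t)-\widehat{x}_{a(t)}(t))^\top\mu_*\big]$, so it suffices to control the per-step expected regret. First I would record the elementary perturbation inequality that follows from the fact that $a(t)$ maximizes $\widehat{x}_i(t)^\top\widetilde{\mu}(t)$ whereas $a^*(t)$ maximizes $\widehat{x}_i(t)^\top\mu_*$, namely
\begin{equation*}
0 \;\le\; (\widehat{x}_{a^*(t)}(t)-\widehat{x}_{a(t)}(t))^\top\mu_* \;\le\; (\widehat{x}_{a(t)}(t)-\widehat{x}_{a^*(t)}(t))^\top(\widetilde{\mu}(t)-\mu_*).
\end{equation*}
This already replaces $\mu_*$ by the deviation $\widetilde{\mu}(t)-\mu_*$, whose size I control, but used crudely (Cauchy--Schwarz with the first factor treated as $O(1)$) it only yields the $\sqrt{T}$ rate. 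The point is that the map $\nu\mapsto\widehat{x}_{a(\nu)}(t)$ is itself Lipschitz in $\nu$ after averaging over the fresh contexts, so the displayed bound is in fact \emph{second order} in the perturbation.

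To make this precise I would introduce the smoothed value function $V(\nu)=\mathbb{E}\big[\max_{1\le i\le N}\widehat{x}_i(t)^\top\nu\big]$, where the expectation is over the time-$t$ contexts $\{\widehat{x}_i(t)\}$, which are i.i.d.\ centered Gaussian (linear images of the Gaussian outputs through $D$) and independent of $\mathscr{F}_{t-1}$. Writing $a(\nu)=\argmax_i\widehat{x}_i(t)^\top\nu$, the per-step regret conditioned on the sample value $\widetilde{\mu}(t)=\nu$ equals $\rho(\nu):=V(\mu_*)-\mathbb{E}[\widehat{x}_{a(\nu)}(t)]^\top\mu_*=V(\mu_*)-\nabla V(\nu)^\top\mu_*$, by the envelope theorem. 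Since $V$ is positively homogeneous of degree one, Euler's identity gives $\rho(\mu_*)=0$ and $\nabla\rho(\mu_*)=-\nabla^2V(\mu_*)\mu_*=0$, so $\rho$ attains its minimum at $\mu_*$ with vanishing gradient. A Taylor expansion then gives $\rho(\nu)=O\big(\|\nabla^2\rho\|\,\|\nu-\mu_*\|^2\big)$ near $\mu_*$, and the key quantitative input is that $V(\nu)\asymp\sqrt{\log N}\,\|\nu\|_{\Gamma}$ for $\Gamma=\mathrm{Cov}(\widehat{x}_i(t))$ (the extreme-value scaling of the maximum of $N$ i.i.d.\ Gaussian scores), whence the relevant curvature of $\rho$ is of order $\sqrt{\log N}$.

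I would then assemble the bound. Conditioning on $\mathscr{F}_{t-1}$ and integrating over the Gaussian sample $\widetilde{\mu}(t)$ gives $\mathbb{E}[\mathrm{regret}(t)\mid\mathscr{F}_{t-1}]=O\big(\sqrt{\log N}\,\mathbb{E}[\|\widetilde{\mu}(t)-\mu_*\|^2\mid\mathscr{F}_{t-1}]\big)$, where the rare event that $\widetilde{\mu}(t)$ leaves the neighborhood on which the quadratic expansion is valid contributes negligibly, because its probability decays exponentially in $t$ (Gaussian tails with covariance $O(t^{-1})$) while $\rho$ is globally bounded by an affine function of the scores. Taking expectations and using Corollary~\ref{col:1}, which yields $\mathbb{E}\|\widetilde{\mu}(t)-\mu_*\|^2=\mathrm{tr}\,\mathrm{Cov}(\widetilde{\mu}(t))+\|\mathrm{bias}\|^2=O(d/t)$ (the trace of a $d\times d$ covariance of order $t^{-1}$, the squared bias being $O(t^{-2})$), I obtain a per-step regret of order $d\sqrt{\log N}/t$. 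Summing and using $\sum_{t=1}^T t^{-1}=O(\log T)$ then delivers $\mathrm{Regret}(T)=O(d\sqrt{\log N}\log T)$.

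The hard part will be the quantitative curvature estimate for $\rho$, i.e.\ upgrading the crude first-order inequality into the second-order one with the correct $\sqrt{\log N}$ constant. Concretely, this requires (i) verifying the third-order smoothness of $V$ uniformly in a neighborhood of $\mu_*$, which rests on the non-degeneracy of the Gaussian context law so that ties in the $\argmax$ occur with probability zero and the joint density of the top two scores is bounded — a property that, as noted after Theorem~\ref{thm:1}, fails when $A$ is singular; and (ii) tracking the dependence on $N$ through the near-maximal order statistics without smuggling in hidden factors of $d$ beyond the trace term. I would handle (i) by conditioning on the contexts and differentiating the Gaussian integral defining $V$, and (ii) by a margin/separation estimate for the two largest of $N$ i.i.d.\ Gaussian scores.
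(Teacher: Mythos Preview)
Your approach is correct and rests on the same mechanism as the paper: the per-step regret, conditioned on the sample $\widetilde{\mu}(t)$, is second order in $\widetilde{\mu}(t)-\mu_*$ with a prefactor of order $\sqrt{\log N}$, after which Corollary~\ref{col:1} supplies the $O(d/t)$ decay and the sum gives $\log T$.

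The difference is one of packaging. You invoke the envelope theorem and Taylor-expand the abstract value function $V(\nu)=\mathbb{E}\big[\max_i\widehat{x}_i(t)^\top\nu\big]$; the paper instead computes $\mathbb{E}[\widehat{x}_{a(\nu)}(t)]$ explicitly. With $S=(D\Sigma_y D^\top)^{1/2}$ the whitened vectors $S^{-1}\widehat{x}_i(t)$ are i.i.d.\ $N(0,I_d)$, so a one-line orthogonal decomposition yields $\mathbb{E}[S^{-1}\widehat{x}_{a(\nu)}(t)]=c_N\,\overrightarrow{S\nu}$ with $c_N=\mathbb{E}[\max_{i\le N}V_i]$, $V_i\sim N(0,1)$. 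The per-step regret then becomes exactly $c_N\|S\mu_*\|(1-\cos\theta_t)$, where $\theta_t$ is the angle between $S\mu_*$ and $S\widetilde{\mu}(t)$, and the quadratic bound is the elementary inequality $1-\cos\theta_t\le \|S\mu_*-S\widetilde{\mu}(t)\|^2/(2\|S\mu_*\|\,\|S\widetilde{\mu}(t)\|)$. In your language this says $V(\nu)=c_N\|S\nu\|$ in closed form, which renders your ``hard part'' (i) vacuous: no differentiation under the integral, no density estimates for near-ties, and the $\sqrt{\log N}$ factor is isolated as $c_N$ with no chance of hidden $d$-dependence. Your abstract route would extend to non-Gaussian contexts at the cost of the smoothness verification you outline; the paper's explicit computation is shorter and replaces your local-versus-global Taylor issue by the milder caveat that $\|S\widetilde{\mu}(t)\|$ is eventually bounded away from zero, which follows from $\widetilde{\mu}(t)\to\mu_*$.
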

Before proceeding towards the proof of Theorem 2, we discuss the intuition it provides. Since the regret at time $t$ grows due to the difference between $\mu_*$ and $\widetilde{\mu}(t)$, the growth rate of regret depends on the shrinkage rate of $||\mu_*-\widetilde{\mu}(t)||^2$. According to Corollary 1, the shrinkage rate is $O(dt^{-1})$. Thus, aggregating the errors for the time period $1 \leq t \leq T$, the scaling with respect to $T$ becomes logarithmic (see \eqref{eq:log}), while the scaling with $d$ is linear. On the other hand, the regret scales logarithmically \emph{slow} with the number of arms $N$, because $N$ has two opposite effects. On the one hand, since $N$ is the total number of options, the probability of choosing a sub-optimal arms increases as $N$ grows. On the other hand, the difference between the reward of the optimal arm and that of the chosen arm becomes smaller as $N$ grows. The consequences of the two effects compensate each other, leading to the slow growth of the regret with respect to $N$. As mentioned, the suggested regret bound works for non-singular $A$. If $A$ is singular and $\mu_* \in C\left(A^\top\right)^\perp$, the regret grows linearly with $T$.

\begin{proof}
First, for the regret of Algorithm \ref{Algo}, it holds that $\mathrm{Regret}(T)=  \mathbb{E}\left[\sum_{t=1}^T  (\widehat{x}_{a^*(t)} - \widehat{x}_{a(t)})^\top \mu_* \right]$, according to \eqref{eq:reg1} and \eqref{eq:reg3}. To proceed, we show that for an arbitrary $\mu_* \in \mathbb{R}^d$, it holds that $ \mathbb{E}\left[\underset{\widehat{x}_i(t),1 \leq i \leq N}{\argmax}\left\{\widehat{x}_i(t)^\top \mu_* \right\} \right] = c_N \overrightarrow{S\mu_*}$, where the constant
\begin{eqnarray}
c_N =  \mathbb{E}\left[\underset{1\leq i \leq N}\max\{V_i:V_i\sim N(0,1)\}\right]\label{eq:cn}
\end{eqnarray}
captures the magnitude, and the unit vector $\overrightarrow{S\mu_*}$ indicates the direction of the expected value of the vector $\widehat{x}_i(t)$ that achieves the maximum value inside the expectation.

To show the above result, define
\begin{eqnarray}
Z(\mu,N) = \underset{Z_i,1 \leq i \leq N}{\argmax}\left\{ Z_i^\top \mu  \right\}\label{eq:zmun},
\end{eqnarray}
where $Z_i$ are independent standard $d$-dimensional normally distributed random vectors. The vector $Z_i$ can be decomposed as $Z_i = P_{\mu} Z_i + (I_d-P_{\mu}) Z_i$, where $P_{\mu}$ is the projection matrix onto $C(\mu)$, which is the $1$-dimensional subspace of the vectors inline with $\mu$. Then, we have $Z(\mu,N) = \underset{\widehat{x}_i(t),1 \leq i \leq N}{\argmax} \left\{ (P_{\mu} Z_i(t))^\top \mu  \right\}$, because $P_{\mu}\mu = \mu$. This implies that only the first term, $P_{\mu} Z_i$, affects the result of $\underset{Z_i,1 \leq i \leq N}{\argmax} \left\{ Z_i^\top  \mu  \right\}$. This means that $Z(\mu,N)$ has the same distribution as $P_{\mu} Z(\mu,N) +(I_d- P_{\mu})Z_{i}$, which means
\begin{eqnarray}
Z(\mu,N) \overset{d}{=}P_{\mu} Z(\mu,N) +(I_d- P_{\mu})Z_{i}\label{eq:equaldis},
\end{eqnarray}
where $\overset{d}{=}$ is used to denote equality of the probability distributions. Thus, since projection on a subspace is a linear operator, it interchanges with expectation, and so we have
\begin{eqnarray}
\mathbb{E}[Z(\mu,N)]=\mathbb{E}\left[P_{\mu} Z(\mu,N) +(I_d- P_{\mu})Z_{i}\right]=P_{\mu} \mathbb{E}[Z(\mu,N)] \in C(\mu) \label{eq:cmu}.
\end{eqnarray}

Next, we claim that $\mathbb{E}[Z(\mu,N) ] = c_N \overrightarrow{\mu}$, where $c_N$ is defined in \eqref{eq:cn}, for which it is known that \cite{cramer2016mathematical}:
\begin{eqnarray}
c_N = O\left(\sqrt{\log N}\right). \label{eq:cnlogn}
\end{eqnarray}
Because $Z_i^\top \overrightarrow{\mu}$ has the standard normal distribution $N(0, 1)$, according to \eqref{eq:cn}, we have $\mathbb{E}\left[\underset{1 \leq i \leq N}{\max} \left\{Z_i^\top \overrightarrow{\mu} \right\}\right]=c_N$. Based on the definition in \eqref{eq:zmun}, it holds that $Z(\mu,N)^\top \overrightarrow{\mu} =\underset{1 \leq i \leq N}{\max} \left\{Z_i^\top \overrightarrow{\mu} \right\}$. Moreover, because $\mathbb{E}[Z(\mu,N) ] \in C(\mu)$ by \eqref{eq:cmu}, we have $c_N = \mathbb{E}[Z(\mu,N) ]^\top \overrightarrow{\mu} = ||\mathbb{E}[Z(\mu,N) ]||~||\overrightarrow{\mu}|| = ||\mathbb{E}[Z(\mu,N) ]||$. Putting the above together, we obtain 
\begin{eqnarray}
\mathbb{E}[Z(\mu,N) ] = c_N \overrightarrow{\mu}.\label{eq:cnmu}
\end{eqnarray}
Next, we apply the result in \eqref{eq:cnmu} to $\widehat{x}_{a^*(t)}(t)$ and $\widehat{x}_{a(t)}(t)$. The definition of $Z(\mu,N)$ in \eqref{eq:zmun} implies that  $S^{-1}\widehat{x}_{a^*(t)}(t)$ can be written as
\begin{equation*}
    \underset{S^{-1}\widehat{x}_i(t),1 \leq i \leq N}{\argmax} \left\{ (S^{-1}\widehat{x}_i(t))^\top  S\mu_*  \right\} = Z(S\mu_*,N).
\end{equation*}
Similarly, it holds that
\begin{flalign*}
S^{-1}\widehat{x}_{a(t)}(t) = \underset{ S^{-1}\widehat{x}_i,1 \leq i \leq N}{\argmax} \left\{ (S^{-1}\widehat{x}_i(t) )^\top S \widetilde{\mu}(t)  \right\} = Z(S\widetilde{\mu}(t),N).
\end{flalign*}
Using \eqref{eq:cnmu}, we can find the expected values as follows:
\begin{eqnarray}
\mathbb{E}[ S^{-1} \widehat{x}_{a^*(t)}] &=&  c_N \overrightarrow{S\mu_*}\label{eq:a*tcn},\\
\mathbb{E}[S^{-1} \widehat{x}_{a(t)}|\widetilde{\mu}(t)] &=& c_N \overrightarrow{S\widetilde{\mu}(t)}.\label{eq:atcn} 
\end{eqnarray}
Using the above equations, we have 

\begin{eqnarray}
\mathbb{E}\left[\sum_{t=1}^T \left(S^{-1} \widehat{x}_{a^*(t)}(t) - S^{-1} \widehat{x}_{a(t)}(t)\right)^\top S \mu_* \right] 
&=& \mathbb{E}\left[ \mathbb{E}\left[\left. \sum_{t=1}^T \left(S^{-1} \widehat{x}_{a^*(t)}(t) - S^{-1} \widehat{x}_{a(t)}(t)\right)^\top S \mu_* \right|\widetilde{\mu}(t) \right]\right]\nonumber\\
&=&  \mathbb{E}\left[ \sum_{t=1}^T \left(c_N \overrightarrow{S\mu_*} -  c_N \overrightarrow{S\widetilde{\mu}(t)} \right)^\top S \mu_* \right]\label{eq:reg}
\end{eqnarray}
for the expected gap.

Now, let $\theta_t$ denote the angle between $S\mu_*$ and $S\widetilde{\mu}(t)$, defined as
\begin{eqnarray}
\theta_t = \cos^{-1} \frac{<S\mu_*,S\widetilde{\mu}(t)>}{||S\mu_*||~||S\widetilde{\mu}(t)||}\label{eq:thetat}~\in [0,\pi].
\end{eqnarray}
Since the vectors $\overrightarrow{S\mu_*}$ and $\overrightarrow{S\widetilde{\mu}(t)}$ are of the same length, the angle between $\overrightarrow{S\mu_*} -  \overrightarrow{S\widetilde{\mu}(t)}$ and $\overrightarrow{S\mu_*}$ is $(\pi-\theta_t)/2$, which leads to $\left|\left|\overrightarrow{S\mu_*} -  \overrightarrow{S\widetilde{\mu}(t)}\right|\right| =2\sin (\theta_t/2)$. Thus, we get
\begin{flalign*}
\left(\overrightarrow{S\mu_*} -  \overrightarrow{S\widetilde{\mu}(t)}\right)^\top S\mu_* 
&=  ||S\mu_*||\left|\left|\overrightarrow{S\mu_*} -  \overrightarrow{S\widetilde{\mu}(t)}\right|\right| \cos \left(\frac{\pi-\theta_t}{2}\right) \\
&= 2 ||S\mu_*|| \sin \left(\frac{\theta_t}{2}\right)  \cos \left(\frac{\pi-\theta_t}{2}\right) \\
&= 2 ||S\mu_*|| \sin^2 \left(\frac{\theta_t}{2}\right) = 2 ||S\mu_*|| (1-\cos \theta_t).
\end{flalign*}
On the other hand, using \eqref{eq:thetat}, we obtain
\begin{flalign*}
1- \cos \theta_t  = \frac{||S\mu_* - S\widetilde{\mu}(t)||^2 - (||S\mu_*|| - ||S\widetilde{\mu}(t)|| )^2}{2||S\mu_*||~||S\widetilde{\mu}(t)|| } \leq   \frac{||S\mu_* - S\widetilde{\mu}(t)||^2 }{2||S\mu_*||~||S\widetilde{\mu}(t)|| }.
\end{flalign*}
To proceed, define $\eta(t)  = S\widetilde{\mu}(t)-S\mu_* + S\mathbb{E}[B(t)^{-1}] \Sigma^{-1}\mu_*$, and note that $\mathbb{E}[\eta(t)\eta(t)^T] = S\mathrm{Cov}(\widetilde{\mu}(t))S$. So, it holds that
\begin{flalign*}
\mathbb{E}[1-\cos \theta_t] 
\leq \mathbb{E}\left[\frac{||\eta(t) - S\mathbb{E}[B(t)^{-1}] \Sigma^{-1}\mu_*||^2}{2||S\mu_*||~||S\widetilde{\mu}(t)|| }\right] \leq \mathbb{E}\left[\frac{||\eta(t)||^2 + ||S\mathbb{E}[B(t)^{-1}] \Sigma^{-1}\mu_*||^2 }{||S\mu_*||~||S\widetilde{\mu}(t)|| }\right].
\end{flalign*}
By Corollary \ref{col:1}, we have 
\begin{eqnarray}
\mathbb{E}[||\eta(t)||^2] =  \mathbf{tr}\left(\mathbb{E}[\eta(t)\eta(t)^T]\right)
=  \mathbf{tr}(S\mathrm{Cov}(\widetilde{\mu}(t))S) = O(dt^{-1}).\label{eq:eta}
\end{eqnarray}
Accordingly, we get
\begin{flalign*}
\mathbb{E}\left[\frac{||\eta(t)||^2 + ||S\mathbb{E}[B(t)^{-1}] \Sigma^{-1}\mu_*||^2 }{||S\mu_*||~||S\widetilde{\mu}(t)|| }\right] = O(d t^{-1}), 
\end{flalign*}
because the expected value of the numerator is $O(t^{-1})$ by \eqref{eq:eta} and Theorem \ref{thm:1}, while the denominator converges to $||S \mu_*||^2$ as $t\rightarrow \infty$, by Corollary \ref{col:1}. Thus, we have 
\begin{eqnarray}
\sum_{t=1}^T \mathbb{E}\left[\frac{||\eta(t)||^2 + ||S\mathbb{E}[B(t)^{-1}] \Sigma^{-1}\mu_*||^2 }{||S\mu_*||~||S\widetilde{\mu}(t)|| }\right]
= O(d\log T).\label{eq:log}
\end{eqnarray}
Putting the latter result together with \eqref{eq:cnlogn}, it yields to the desired result, since $c_N$ depends only on $N$, and $||S\mu_*||$ is a constant:
\begin{flalign*}
\mathrm{Regret}(T) = \sum_{t=1}^T c_N \mathbb{E}[2 ||S\mu_*|| (1-\cos \theta_t)]  = O(d \sqrt{\log N} \log T).
\end{flalign*}
\end{proof}

\section{Numerical Illustrations}
\label{sec:5}
\begin{figure*}[h]
    \psfrag{error~~~~~~~~~~~~~~~~~~~~~~~~~~}{\scriptsize$\mathbb{E}\left[||\widehat{\mu}(t)-\mu_*||/\sqrt{d}\right]$}
    \psfrag{d=10}{\scriptsize$\mathbf{d=10}$}
    \psfrag{d=30}{\scriptsize$\mathbf{d=30}$}
    \psfrag{t}{\scriptsize t}
    \psfrag{N=5~~~}{\scriptsize $N=5$}
    \psfrag{N=10~~~}{\scriptsize $N=10$}
    \psfrag{N=20~~~}{\scriptsize $N=20$}
    \psfrag{N=50~~~}{\scriptsize $N=50$}
    \psfrag{Regret~~~~~~~~~~~~~~~}{\scriptsize $\mathbf{Regret}(t)/\left(d \log t \sqrt{\log N}\right)$}
    \centering
    \includegraphics[width=0.49\textwidth]{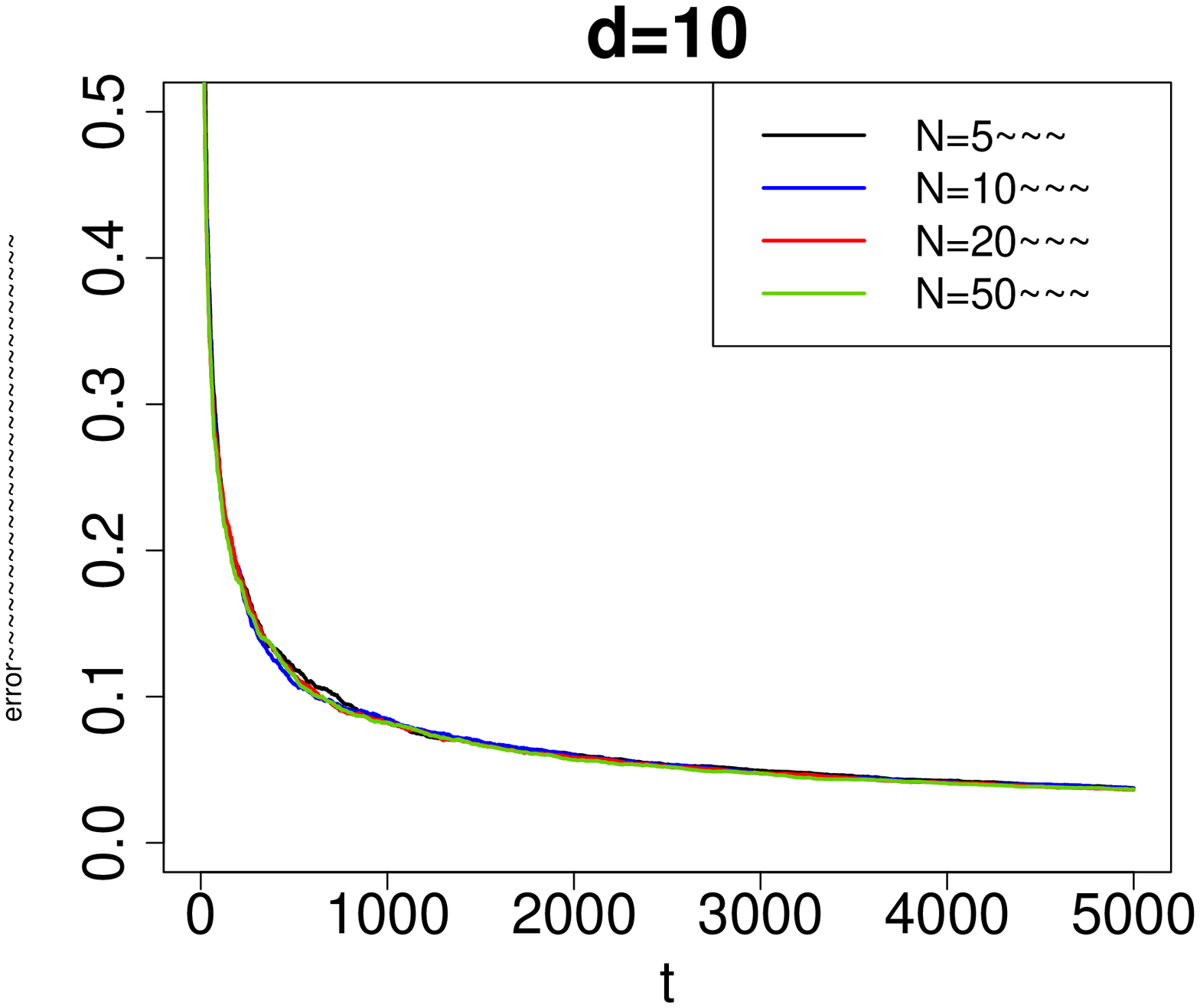}
    \includegraphics[width=0.49\textwidth]{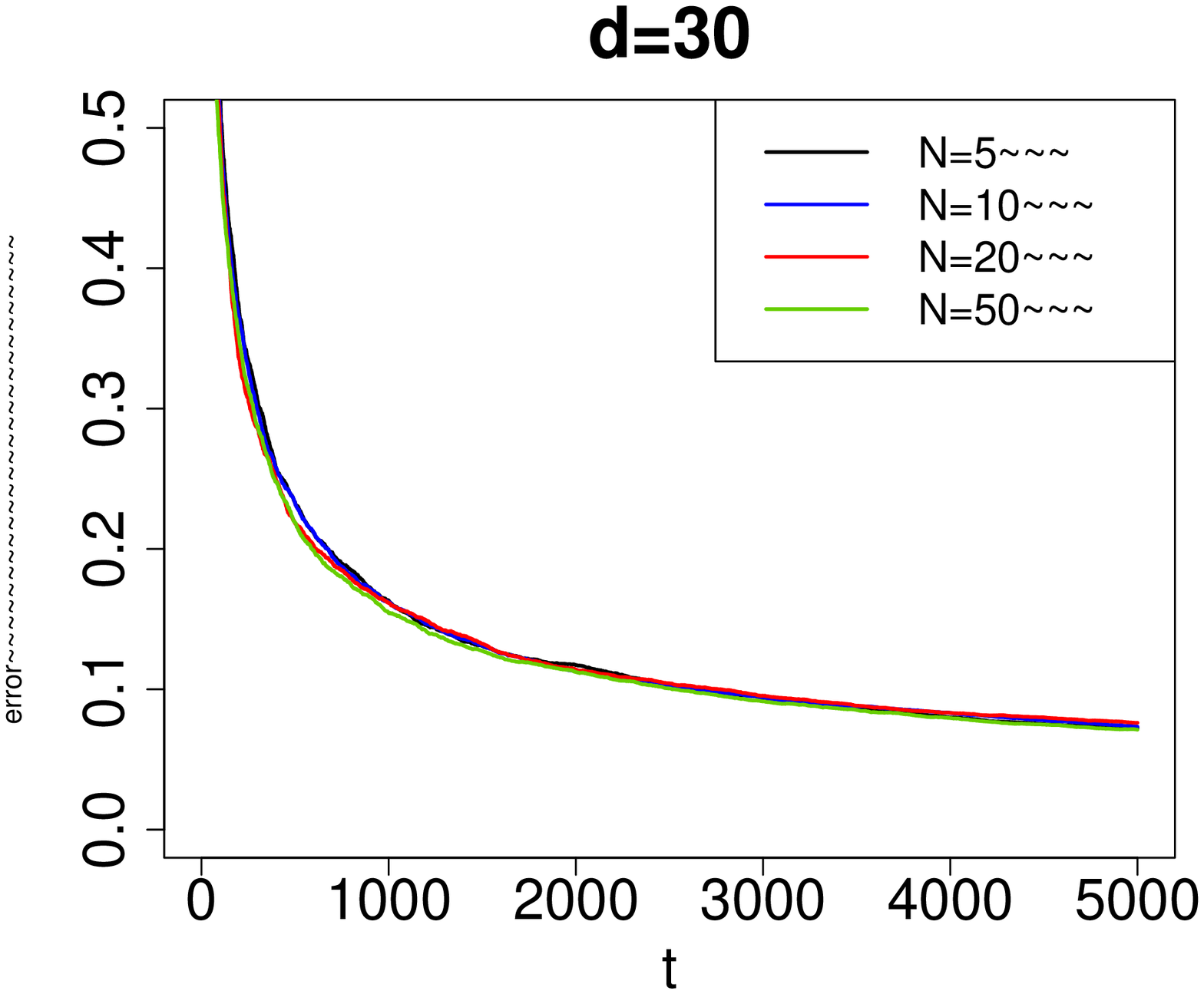}
    \caption{Plots of $\mathbb{E}\left[||\widehat{\mu}(t)-\mu_*||/\sqrt{d}\right]$ over time for different number of arms $N=5,10,20,50$, and different dimensions of the contexts $d=10,30$.}
    \label{fig:1}
\end{figure*}
\begin{figure*}[h]
    \psfrag{error~~~~~~~~~~~~~~~~~~~~~~~~~~}{\scriptsize$\mathbb{E}\left[||\widehat{\mu}(t)-\mu_*||/\sqrt{d}\right]$}
    \psfrag{d=10}{\scriptsize$\mathbf{d=10}$}
    \psfrag{d=30}{\scriptsize$\mathbf{d=30}$}
    \psfrag{t}{\scriptsize t}
    \psfrag{N=5~~~}{\scriptsize $N=5$}
    \psfrag{N=10~~~}{\scriptsize $N=10$}
    \psfrag{N=20~~~}{\scriptsize $N=20$}
    \psfrag{N=50~~~}{\scriptsize $N=50$}
    \psfrag{Regret~~~~~~~~~~~~~~~}{\scriptsize $\mathbf{Regret}(t)/\left(d \log t \sqrt{\log N}\right)$}
    \centering
    \includegraphics[width=0.49\textwidth]{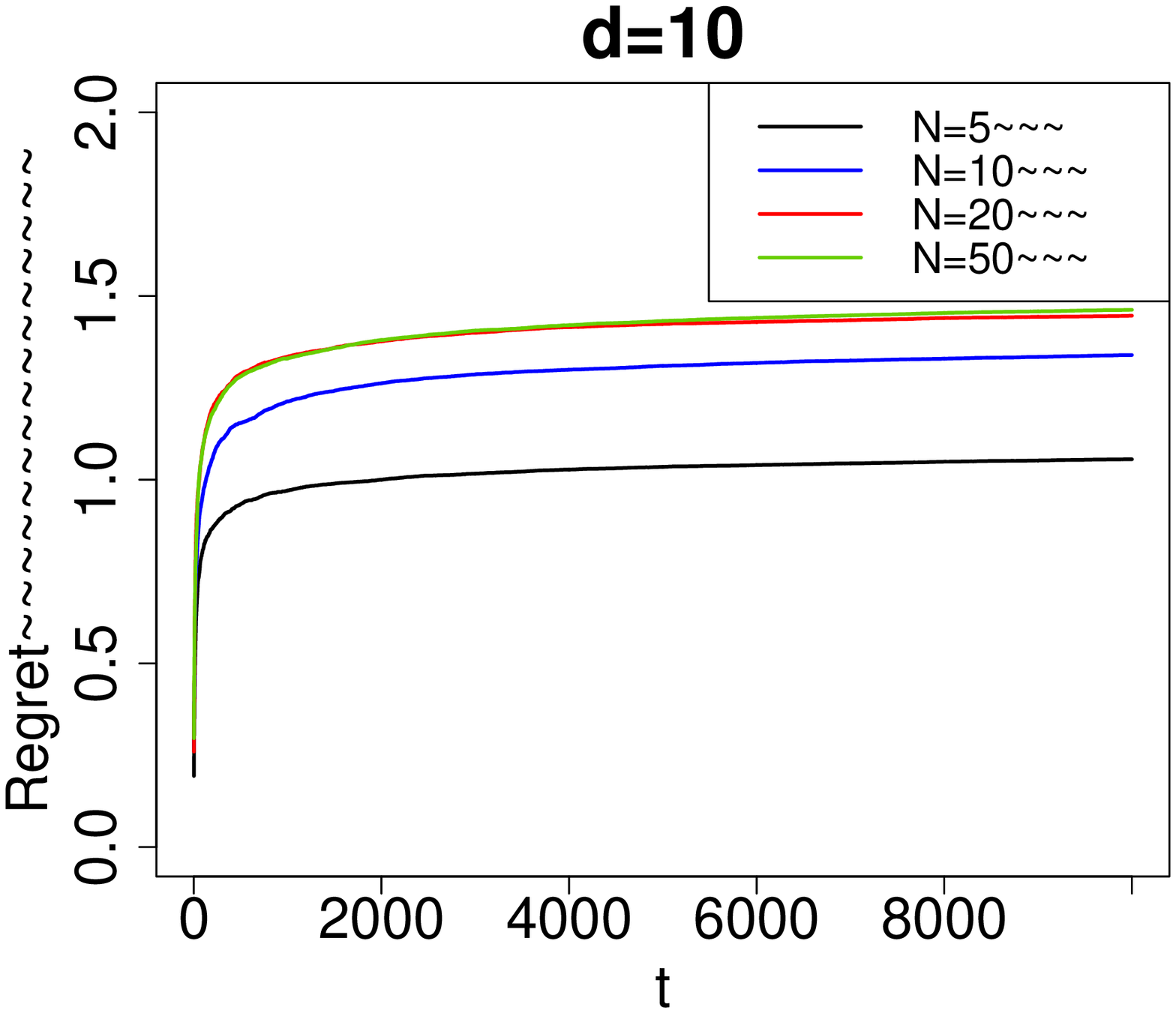}
    \includegraphics[width=0.49\textwidth]{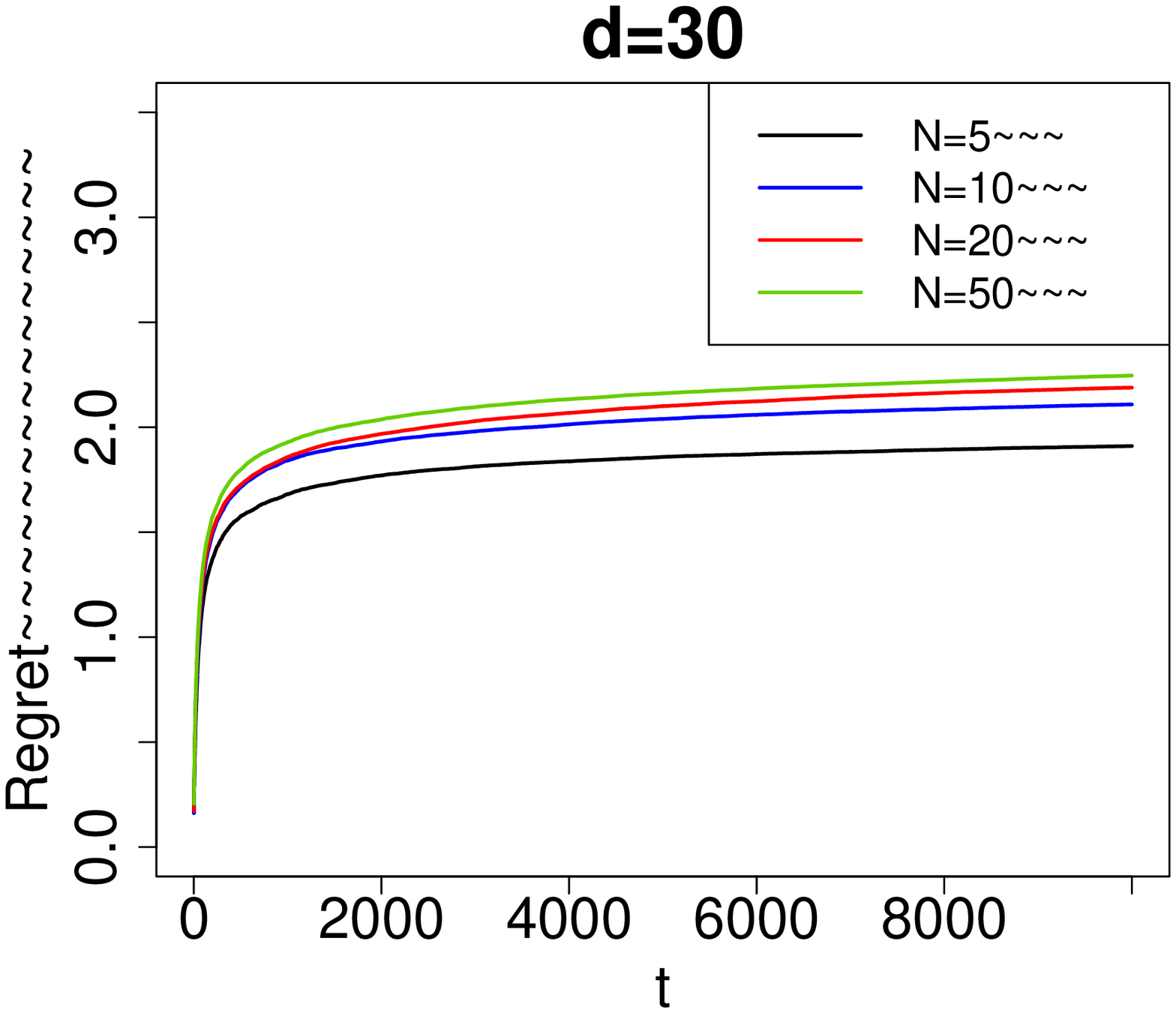}
    \caption{Plots of the regret normalized by $d \log t \sqrt{\log N}$, over time for different number of arms $N=5,10,20,50$, and the dimension of the context $d=10,30$.}
    \label{fig:2}
\end{figure*}

We consider cases with different numbers of arms, $N=5,~10,~20,~50,$ and different dimensions of the contexts $d=10,~30$, repeating 50 times for each case, for every time step. We report two quantities, $||\widehat{\mu}(t) - \mu_*||$ and $\mathrm{Regret}(t)$, over time, and take averages of the quantities for 50 scenarios. The true parameter $\mu_*$ as well as each row of $A$, are randomly generated. Further, we let $\Sigma_x = I_d$, $\Sigma_y = I_d$, and $\sigma^2 = 1$.

Figure \ref{fig:1} depicts the average norm of the normalized errors over time. We normalize the errors by $\sqrt{d}$, since $\mathrm{Cov}(\widetilde{\mu}(t))=O(t^{-1})$, by Corollary \ref{col:1}, and so $\mathbf{tr}(\mathrm{Cov}(\widetilde{\mu}(t))) = O(dt^{-1})$. The curves in Figure \ref{fig:1} show that the errors decrease with the appropriate rates. Figure \ref{fig:2} illustrates the normalized regret over time. The regret is normalized by its bound $d\log t \sqrt{\log N}$ in Theorem \ref{thm:1}. In Figure \ref{fig:2}, the curves show that the normalized regret is constant over time, corroborating the regret bound in Theorem \ref{thm:2}.

\section{Concluding remarks}
\label{sec:6}
We studied the design and analysis of a reinforcement learning policy for partially observable contextual multi-armed bandits. First, we presented a modified version of Thompson Sampling that leverages Bayesian methods for balancing the exploration and the exploitation, and estimates the unobserved contexts based on the sequence of output observations. Further, we show that the parameter estimates converge fast to the truth, and that as time goes by, the presented algorithm learns the unknown true parameter accurately.  Finally, we established theoretical performance guarantees showing that the regret of the proposed algorithm scales linearly with dimension, and logarithmically with time and the number of arms.

Extending the presented framework to similar reinforcement learning problems is of interest, including partially observed (contextual) Markov decision processes.  Moreover, addressing the problem when the transformation matrix is unknown, and incorporating an estimation procedure for that, is an interesting direction for future work. Finally, settings with large-scale action spaces and those with high-dimensional parameters constitute further topics for future studies.

\ifarxiv
\appendices
\input{appendixA}
\input{appendixB}
\input{Bibliofile}
\else
\bibliographystyle{IEEEtran}    
\fi 
\thispagestyle{empty}
\bibliography{mybib}

\begin{thebibliography}{10}
\providecommand{\url}[1]{#1}
\csname url@samestyle\endcsname
\providecommand{\newblock}{\relax}
\providecommand{\bibinfo}[2]{#2}
\providecommand{\BIBentrySTDinterwordspacing}{\spaceskip=0pt\relax}
\providecommand{\BIBentryALTinterwordstretchfactor}{4}
\providecommand{\BIBentryALTinterwordspacing}{\spaceskip=\fontdimen2\font plus
\BIBentryALTinterwordstretchfactor\fontdimen3\font minus
  \fontdimen4\font\relax}
\providecommand{\BIBforeignlanguage}[2]{{%
\expandafter\ifx\csname l@#1\endcsname\relax
\typeout{** WARNING: IEEEtran.bst: No hyphenation pattern has been}%
\typeout{** loaded for the language `#1'. Using the pattern for}%
\typeout{** the default language instead.}%
\else
\language=\csname l@#1\endcsname
\fi
#2}}
\providecommand{\BIBdecl}{\relax}
\BIBdecl

\bibitem{li2010contextual}
L.~Li, W.~Chu, J.~Langford, and R.~E. Schapire, ``A contextual-bandit approach
  to personalized news article recommendation,'' in \emph{Proceedings of the
  19th international conference on World wide web}, 2010, pp. 661--670.

\bibitem{durand2018contextual}
A.~Durand, C.~Achilleos, D.~Iacovides, K.~Strati, G.~D. Mitsis, and J.~Pineau,
  ``Contextual bandits for adapting treatment in a mouse model of de novo
  carcinogenesis,'' in \emph{Machine learning for healthcare conference}.\hskip
  1em plus 0.5em minus 0.4em\relax PMLR, 2018, pp. 67--82.

\bibitem{tewari2017ads}
A.~Tewari and S.~A. Murphy, ``From ads to interventions: Contextual bandits in
  mobile health,'' in \emph{Mobile Health}.\hskip 1em plus 0.5em minus
  0.4em\relax Springer, 2017, pp. 495--517.

\bibitem{lai1985asymptotically}
T.~L. Lai and H.~Robbins, ``Asymptotically efficient adaptive allocation
  rules,'' \emph{Advances in applied mathematics}, vol.~6, no.~1, pp. 4--22,
  1985.

\bibitem{abbasi2011improved}
Y.~Abbasi-Yadkori, D.~P{\'a}l, and C.~Szepesv{\'a}ri, ``Improved algorithms for
  linear stochastic bandits,'' \emph{Advances in neural information processing
  systems}, vol.~24, pp. 2312--2320, 2011.

\bibitem{faradonbeh2020optimism}
M.~K.~S. Faradonbeh, A.~Tewari, and G.~Michailidis, ``Optimism-based adaptive
  regulation of linear-quadratic systems,'' \emph{IEEE Transactions on
  Automatic Control}, vol.~66, no.~4, pp. 1802--1808, 2020.

\bibitem{auer2002finite}
P.~Auer, N.~Cesa-Bianchi, and P.~Fischer, ``Finite-time analysis of the
  multiarmed bandit problem,'' \emph{Machine learning}, vol.~47, no.~2, pp.
  235--256, 2002.

\bibitem{auer2002using}
P.~Auer, ``Using confidence bounds for exploitation-exploration trade-offs,''
  \emph{Journal of Machine Learning Research}, vol.~3, no. Nov, pp. 397--422,
  2002.

\bibitem{chu2011contextual}
W.~Chu, L.~Li, L.~Reyzin, and R.~Schapire, ``Contextual bandits with linear
  payoff functions,'' in \emph{Proceedings of the Fourteenth International
  Conference on Artificial Intelligence and Statistics}.\hskip 1em plus 0.5em
  minus 0.4em\relax JMLR Workshop and Conference Proceedings, 2011, pp.
  208--214.

\bibitem{chapelle2011empirical}
O.~Chapelle and L.~Li, ``An empirical evaluation of thompson sampling,''
  \emph{Advances in neural information processing systems}, vol.~24, pp.
  2249--2257, 2011.

\bibitem{russo2014learning}
D.~Russo and B.~Van~Roy, ``Learning to optimize via posterior sampling,''
  \emph{Mathematics of Operations Research}, vol.~39, no.~4, pp. 1221--1243,
  2014.

\bibitem{thompson1933likelihood}
W.~R. Thompson, ``On the likelihood that one unknown probability exceeds
  another in view of the evidence of two samples,'' \emph{Biometrika}, vol.~25,
  no. 3/4, pp. 285--294, 1933.

\bibitem{agrawal2012analysis}
S.~Agrawal and N.~Goyal, ``Analysis of thompson sampling for the multi-armed
  bandit problem,'' in \emph{Conference on learning theory}.\hskip 1em plus
  0.5em minus 0.4em\relax JMLR Workshop and Conference Proceedings, 2012, pp.
  39--1.

\bibitem{agrawal2013thompson}
------, ``Thompson sampling for contextual bandits with linear payoffs,'' in
  \emph{International Conference on Machine Learning}.\hskip 1em plus 0.5em
  minus 0.4em\relax PMLR, 2013, pp. 127--135.

\bibitem{faradonbeh2020adaptive}
M.~K.~S. Faradonbeh, A.~Tewari, and G.~Michailidis, ``On adaptive
  linear--quadratic regulators,'' \emph{Automatica}, vol. 117, p. 108982, 2020.

\bibitem{hu2019note}
T.~Hu, E.~B. Laber, Z.~Li, N.~J. Meyer, and K.~Pacifici, ``Note on thompson
  sampling for large decision problems,'' \emph{arXiv preprint
  arXiv:1905.04735}, 2019.

\bibitem{russo2017tutorial}
D.~Russo, B.~Van~Roy, A.~Kazerouni, I.~Osband, and Z.~Wen, ``A tutorial on
  thompson sampling,'' \emph{arXiv preprint arXiv:1707.02038}, 2017.

\bibitem{raghavan2020greedy}
M.~Raghavan, A.~Slivkins, J.~W. Vaughan, and Z.~S. Wu, ``Greedy algorithm
  almost dominates in smoothed contextual bandits,'' \emph{arXiv preprint
  arXiv:2005.10624}, 2020.

\bibitem{bastani2021mostly}
H.~Bastani, M.~Bayati, and K.~Khosravi, ``Mostly exploration-free algorithms
  for contextual bandits,'' \emph{Management Science}, vol.~67, no.~3, pp.
  1329--1349, 2021.

\bibitem{slivkins2011contextual}
A.~Slivkins, ``Contextual bandits with similarity information,'' in
  \emph{Proceedings of the 24th annual Conference On Learning Theory}.\hskip
  1em plus 0.5em minus 0.4em\relax JMLR Workshop and Conference Proceedings,
  2011, pp. 679--702.

\bibitem{hu2020smooth}
Y.~Hu, N.~Kallus, and X.~Mao, ``Smooth contextual bandits: Bridging the
  parametric and non-differentiable regret regimes,'' in \emph{Conference on
  Learning Theory}.\hskip 1em plus 0.5em minus 0.4em\relax PMLR, 2020, pp.
  2007--2010.

\bibitem{maillard2014latent}
O.-A. Maillard and S.~Mannor, ``Latent bandits.'' in \emph{International
  Conference on Machine Learning}.\hskip 1em plus 0.5em minus 0.4em\relax PMLR,
  2014, pp. 136--144.

\bibitem{zhou2016latent}
L.~Zhou and E.~Brunskill, ``Latent contextual bandits and their application to
  personalized recommendations for new users,'' \emph{arXiv preprint
  arXiv:1604.06743}, 2016.

\bibitem{hong2020latent}
J.~Hong, B.~Kveton, M.~Zaheer, Y.~Chow, A.~Ahmed, and C.~Boutilier, ``Latent
  bandits revisited,'' \emph{arXiv preprint arXiv:2006.08714}, 2020.

\bibitem{bensoussan2004stochastic}
A.~Bensoussan, \emph{Stochastic control of partially observable systems}.\hskip
  1em plus 0.5em minus 0.4em\relax Cambridge University Press, 2004.

\bibitem{bouneffouf2017context}
D.~Bouneffouf, I.~Rish, G.~A. Cecchi, and R.~F{\'e}raud, ``Context attentive
  bandits: Contextual bandit with restricted context,'' \emph{arXiv preprint
  arXiv:1705.03821}, 2017.

\bibitem{tennenholtz2021bandits}
G.~Tennenholtz, U.~Shalit, S.~Mannor, and Y.~Efroni, ``Bandits with partially
  observable confounded data,'' in \emph{Conference on Uncertainty in
  Artificial Intelligence. PMLR}, 2021.

\bibitem{roesser1975discrete}
R.~Roesser, ``A discrete state-space model for linear image processing,''
  \emph{IEEE Transactions on Automatic Control}, vol.~20, no.~1, pp. 1--10,
  1975.

\bibitem{nagrath2006control}
I.~Nagrath, \emph{Control systems engineering}.\hskip 1em plus 0.5em minus
  0.4em\relax New Age International, 2006.

\bibitem{durbin2012time}
J.~Durbin and S.~J. Koopman, \emph{Time series analysis by state space
  methods}.\hskip 1em plus 0.5em minus 0.4em\relax Oxford university press,
  2012.

\bibitem{kalman1960new}
R.~E. Kalman, ``A new approach to linear filtering and prediction problems,''
  1960.

\bibitem{stratonovich1959optimum}
R.~L. Stratonovich, ``Optimum nonlinear systems which bring about a separation
  of a signal with constant parameters from noise,'' \emph{Radiofizika},
  vol.~2, no.~6, pp. 892--901, 1959.

\bibitem{stratonovich1960application}
------, ``Application of the markov processes theory to optimal filtering,''
  \emph{Radio Engineering and Electronic Physics}, vol.~5, pp. 1--19, 1960.

\bibitem{howard2008state}
T.~M. Howard, C.~J. Green, A.~Kelly, and D.~Ferguson, ``State space sampling of
  feasible motions for high-performance mobile robot navigation in complex
  environments,'' \emph{Journal of Field Robotics}, vol.~25, no. 6-7, pp.
  325--345, 2008.

\bibitem{surmann2020deep}
H.~Surmann, C.~Jestel, R.~Marchel, F.~Musberg, H.~Elhadj, and M.~Ardani, ``Deep
  reinforcement learning for real autonomous mobile robot navigation in indoor
  environments,'' \emph{arXiv preprint arXiv:2005.13857}, 2020.

\bibitem{bubeck2012regret}
S.~Bubeck and N.~Cesa-Bianchi, ``Regret analysis of stochastic and
  nonstochastic multi-armed bandit problems,'' \emph{arXiv preprint
  arXiv:1204.5721}, 2012.

\bibitem{doob1953stochastic}
J.~L. Doob, \emph{Stochastic processes}.\hskip 1em plus 0.5em minus 0.4em\relax
  New York Wiley, 1953, vol.~10.

\bibitem{cramer2016mathematical}
H.~Cram{\'e}r, \emph{Mathematical Methods of Statistics (PMS-9), Volume
  9}.\hskip 1em plus 0.5em minus 0.4em\relax Princeton university press, 2016.

\end{thebibliography}

\section*{Appendix}

\subsection*{Derivation of the conditional distribution $\mathbb{P}(x_i(t)|y_i(t))$}

Note that $y_i(t) = Ax_i(t) + \varepsilon_{yi}(t)$, where the distributions of $\varepsilon_{y_i}(t)$ and $x_i(t)$ are $N(0_d, \Sigma_y)$ and $N(0_d, \Sigma_x)$, respectively. The conditional distribution of $x_i(t)$ given $y_i(t)$ can be calculated as follows.
\begin{flalign}
\mathbb{P}(x_i(t)|y_i(t)) &\propto \mathbb{P}(y_i(t)|x_i(t)) \mathbb{P}(x_i(t))\nonumber\\
&\propto \exp\left((y_i(t)-Ax_i(t))^\top\Sigma_y^{-1}(y_i(t)-Ax_i(t)) \right) \exp\left(x_i(t)^\top \Sigma_x^{-1}x_i(t)\right)\nonumber\\
&\propto N((A^{\top} \Sigma_y^{-1} A + \Sigma_x^{-1})^{-1}A^\top \Sigma_y^{-1} y_{i}(t),(A^{\top} \Sigma_y^{-1} A + \Sigma_x^{-1})^{-1})
\end{flalign}

\subsection*{Derivation of the conditional distribution $\mathbb{P}(r_i(t)|y_i(t))$}

Let $\Sigma_{xy}= (A^{\top} \Sigma_y^{-1} A + \Sigma_x^{-1})^{-1}$ and  recall $\widehat{x}_i(t) = (A^{\top} \Sigma_y^{-1} A + \Sigma_x^{-1})^{-1}A^\top \Sigma_y^{-1} y_{i}(t) = Dy_{i}(t)$.
\begin{flalign}
\mathbb{P}(r_i(t)|\mu,y_i(t)) 
&= \int_{\mathbb{R}^d } \mathbb{P}(r_i(t)|\mu,x_i(t)) \mathbb{P}(x_i(t)|y_i(t)) dx_i(t)\nonumber\\
&\propto \int_{\mathbb{R}^d } \exp\left( -\frac{(r_i(t)-x_i(t)^\top \mu)^2}{2\sigma^2} \right) \exp\left( - \frac{1}{2}(x_i(t)-\widehat{x}_i(t) )^\top \Sigma_{xy}^{-1} (x_i(t)-\widehat{x}_i(t) ) \right) dx_i(t)\nonumber\\
&\propto \exp\left(- \frac{\left(r_i(t) - ((A^{\top} \Sigma_y^{-1} A + \Sigma_x^{-1})^{-1}A^\top \Sigma_y^{-1} y_i(t))^\top \mu \right)^2}{2(\mu^\top \Sigma_{xy} \mu + \sigma^2)}    \right)\nonumber\\
&\propto N\left(  \widehat{x}_i(t)^\top \mu, \sigma^2_{ry} \right)\label{eq:condry}.
\end{flalign}

\subsection*{Derivation of the posterior 
$\mathbb{P}(\mu|\mathbf{r}_{t-1},\mathbf{y}_{t-1})$ }
Let $\mathbb{P}(\mu)$, the pdf of $N(0,  \sigma^2_{ry} \Sigma )$, be the prior of $\mu_*$. We can decompose the posterior as follows.
\begin{flalign*}
\mathbb{P}(\mu|\mathbf{r}_{t-1},\mathbf{y}_{t-1}) 
&\propto \mathbb{P}(\mathbf{r}_{t-1},\mathbf{y}_{t-1}|\mu)\mathbb{P}(\mu)\\
&\propto \mathbb{P}(\mathbf{r}_{t-1}|\mathbf{y}_{t-1},\mu)\mathbb{P}(\mu).
\end{flalign*}

Using the prior and the conditional distribution in \eqref{eq:condry}, we have

\begin{flalign}
\mathbb{P}(\mu|\mathbf{r}_{t-1},\mathbf{y}_{t-1}) &\propto \prod_{\tau=1}^{t-1} \exp\left(-\frac{ (r_{a(\tau)}(\tau) - \widehat{x}_{a(\tau)}(\tau)^\top \mu)^2}{2\sigma^2_{ry}} \right) \exp\left( -\frac{1}{2\sigma^2_{ry} } \mu^\top \Sigma^{-1} \mu \right) \nonumber\\
&\propto \exp\left( -\frac{1}{2\sigma^2_{ry}} \left(\mu- \widehat{\mu}(t) \right)^\top B(t) \left(\mu- \widehat{\mu}(t) \right)  \right),\label{eq:posmu}
\end{flalign}
which is the kernel of the pdf of $N(\widehat{\mu}(t), \sigma^2_{ry} B(t)^{-1})$, where $\widehat{\mu}(t) = B(t)^{-1}\sum_{\tau=1}^{t-1} \widehat{x}_{a(t)}(t) r_{a(t)}(t)$ and $$B(t) = \sum_{\tau=1}^{t-1} \widehat{x}_{a(\tau)}(\tau) \widehat{x}_{a(\tau)}^\top (\tau) + \Sigma^{-1}. $$

Thus, the posterior distribution is $N(\widehat{\mu}(t),\sigma^2_{ry} B(t)^{-1})$. But, to allow for the possibility that 
$\sigma^2_{ry}$ is unknown, we use a re-scaled posterior distribution, $N(\widehat{\mu}(t),B(t)^{-1})$, which does not depend on $\sigma^2_{ry}$.

\subsection*{Derivation of the recursion formula to update the parameter. }

Note that we can decompose the posterior as follows.
\begin{eqnarray}
\mathbb{P}(\mu|\mathbf{r}_{t},\mathbf{y}_{t}) &\propto& \mathbb{P}(\mathbf{r}_{t},\mathbf{y}_{t},\mu) \nonumber\\
&\propto& \mathbb{P}(r_{a(t)}(t) | y_{a(t)}(t),\mu) \mathbb{P}(\mu|\mathbf{r}_{t-1},\mathbf{y}_{t-1}).\nonumber
\end{eqnarray}

Using the conditional distribution \eqref{eq:condry} and the posterior in \eqref{eq:posmu}, we get
\begin{flalign*}
\mathbb{P}(\mu|\mathbf{r}_{t},\mathbf{y}_{t}) &\propto \mathbb{P}(r_{a(t)}(t) | y_{a(t)}(t),\mu) \mathbb{P}(\mu|\mathbf{r}_{t-1},\mathbf{y}_{t-1})\\
&\propto \exp\left(-\frac{ (r_{a(t)}(t) - \widehat{x}_{a(t)}(t)^\top \mu)^2}{2\sigma^2_{ry}} \right) \exp\left( -\frac{1}{2\sigma^2_{ry}} \left(\mu- \widehat{\mu}(t) \right)^\top B(t)^{-1} \left(\mu- \widehat{\mu}(t) \right)  \right)\\
&\propto \exp\left( -\frac{1}{2\sigma^2_{ry}} \left(\mu- \widehat{\mu}(t+1) \right)^\top B(t+1)^{-1} \left(\mu- \widehat{\mu}(t+1) \right)  \right),
\end{flalign*}
where $\widehat{\mu}(t+1) = B(t+1)^{-1} \left(B(t) \widehat{\mu}(t) + \widehat{x}_{a(t)}(t) r_{a(t)}(t)\right)$ and $B(t+1) = B(t) + \widehat{x}_{a(t)}(t)\widehat{x}_{a(t)}(t)^\top$. 

\subsection*{Proof of Lemma \ref{lem1}}
\begin{proof}
	Recall that we used the notation $S= \mathrm{Var} (\widehat{x}_{i}(t))^{0.5} = (D\Sigma_yD^\top)^{0.5}$ and $Z(\mu,N) = \argmax_{Z_i,1 \leq i \leq N} \{Z_i^\top \mu\}$. Note that $S^{-1}\widehat{x}_{i}(t)$ has the distribution $N(0_d, I_d)$ and $S^{-1}\widehat{x}_{a(t)}(t) = Z(S\widetilde{\mu}(t) ,N)$.  $S^{-1}\widehat{x}_{i}(t)$ can be decomposed as $$S^{-1}\widehat{x}_{i}(t)= P_{S\widetilde{\mu}(t)}S^{-1}\widehat{x}_{i}(t) + P_{S\widetilde{\mu}(t)^\perp}S^{-1}\widehat{x}_{i}(t),$$ where $P_{S\widetilde{\mu}(t)^\perp}$ denotes the projection matrix onto a subspace orthogonal to the column-space $C(S\widetilde{\mu}(t))$, which we denote $C(S\widetilde{\mu}(t))^\perp$. As shown in \eqref{eq:equaldis}, we have 
	$$S^{-1} \widehat{x}_{a(t)}(t)   \overset{d}{=}P_{S\widetilde{\mu}(t)} S^{-1} \widehat{x}_{a(t)}(t) + P_{S\widetilde{\mu}(t)^\perp} S^{-1} \widehat{x}_{i}(t),$$ 
	where $\overset{d}{=}$ expresses that the two quantities have an identical distribution. Further, based on the fact that the function $Z(\mu,N)$ defined in \eqref{eq:zmun} is affected only by $\{P_{\mu}Z_i\}_{1\leq i \leq N}$, but not by $\{(I_d-P_{\mu})Z_i\}_{1\leq i \leq N}$, we established that  $P_{S\widetilde{\mu}(t)} S^{-1} \widehat{x}_{a(t)}(t)$ and $P_{S\widetilde{\mu}(t)^\perp} S^{-1} \widehat{x}_{i}(t)$ are statistically independent. Now, consider the following decomposition.
	\begin{flalign*}
	&\mathbb{E}[ S^{-1} \widehat{x}_{a(t)}(t)\widehat{x}_{a(t)}(t)^{\top} S^{-1}|\widetilde{\mu}(t)]\nonumber\\ 
	&= P_{S\widetilde{\mu}(t)} E[ S^{-1} \widehat{x}_{a(t)}(t)\widehat{x}_{a(t)}(t)^{\top} S^{-1}]P_{S\widetilde{\mu}(t)}
	+ P_{S\widetilde{\mu}(t)^\perp}\mathbb{E}[ S^{-1} \widehat{x}_{a(t)}(t)\widehat{x}_{a(t)}(t)^{\top} S^{-1}|\widetilde{\mu}(t)]P_{S\widetilde{\mu}(t)^\perp} \nonumber\\
	&+ P_{S\widetilde{\mu}(t)}\mathbb{E}[ S^{-1} \widehat{x}_{a(t)}(t)\widehat{x}_{a(t)}(t)^{\top} S^{-1}|\widetilde{\mu}(t)]P_{S\widetilde{\mu}(t)^\perp} + P_{S\widetilde{\mu}(t)^\perp}\mathbb{E}[ S^{-1} \widehat{x}_{a(t)}(t)\widehat{x}_{a(t)}(t)^{\top} S^{-1}|\widetilde{\mu}(t)]P_{S\widetilde{\mu}(t)}. \nonumber
	\end{flalign*}
	By replacing $P_{S\widetilde{\mu}(t)} S^{-1} \widehat{x}_{a(t)}(t)$ with $P_{S\widetilde{\mu}(t)^\perp}S^{-1}\widehat{x}_{i}(t)$ based on the independence and the equivalence of the distribution, we get
	\begin{eqnarray}
	P_{S\widetilde{\mu}(t)^\perp}\mathbb{E}[ S^{-1} \widehat{x}_{a(t)}(t)\widehat{x}_{a(t)}(t)^{\top} S^{-1}|\widetilde{\mu}(t)]P_{S\widetilde{\mu}(t)^\perp}
	= P_{S\widetilde{\mu}(t)^\perp}\mathbb{E}[ S^{-1} \widehat{x}_{i}(t)\widehat{x}_{i}(t)^{\top}S^{-1}|\widetilde{\mu}(t)]P_{S\widetilde{\mu}(t)^\perp} 
	= P_{S\widetilde{\mu}(t)^\perp}, \label{eq:dec1}
	\end{eqnarray}
	and
	\begin{eqnarray}
	&&P_{S\widetilde{\mu}(t)}\mathbb{E}[ S^{-1} \widehat{x}_{a(t)}(t)\widehat{x}_{i}(t)^{\top} S^{-1}|\widetilde{\mu}(t)]P_{S\widetilde{\mu}(t)^\perp} + P_{S\widetilde{\mu}(t)^\perp}\mathbb{E}[ S^{-1} \widehat{x}_{i}(t)\widehat{x}_{a(t)}(t)^{\top} S^{-1}|\widetilde{\mu}(t)]P_{S\widetilde{\mu}(t)}\nonumber\\
	&=& P_{S\widetilde{\mu}(t)}\mathbb{E}[ S^{-1} \widehat{x}_{a(t)}(t)|\widetilde{\mu}(t)]\mathbb{E}[\widehat{x}_{i}(t)^{\top} S^{-1}|\widetilde{\mu}(t)]P_{S\widetilde{\mu}(t)^\perp} + P_{S\widetilde{\mu}(t)^\perp}\mathbb{E}[ S^{-1} \widehat{x}_{i}(t)|\widetilde{\mu}(t)]\mathbb{E}[\widehat{x}_{a(t)}(t)^{\top} S^{-1}|\widetilde{\mu}(t)]P_{S\widetilde{\mu}(t)}\nonumber\\
	&=& 0, \label{eq:dec2}
	\end{eqnarray}
	because $\mathbb{E}[\widehat{x}_{i}(t)|\widetilde{\mu}(t)]=0$. Thus, by putting \eqref{eq:dec1} and \eqref{eq:dec2} together, we have
	\begin{flalign*}
	&\mathbb{E}[ S^{-1} \widehat{x}_{a(t)}(t)\widehat{x}_{a(t)}(t)^{\top} S^{-1}|\widetilde{\mu}(t)]= P_{S\widetilde{\mu}(t)} \mathbb{E}[ S^{-1} \widehat{x}_{a(t)}(t)\widehat{x}_{a(t)}(t)^{\top} S^{-1}|\widetilde{\mu}(t)]P_{S\widetilde{\mu}(t)} + P_{S\widetilde{\mu}(t)^\perp}. 
	\end{flalign*}
	
	On the other hand, $P_{S\widetilde{\mu}(t)} \mathbb{E}[ S^{-1} \widehat{x}_{a(t)}(t)\widehat{x}_{a(t)}(t)^{\top} S^{-1}|\widetilde{\mu}(t)]P_{S\widetilde{\mu}(t)}$ can be written as
	\begin{eqnarray}
P_{S\widetilde{\mu}(t)}\mathbb{E}[ S^{-1} \widehat{x}_{a(t)}(t)\widehat{x}_{a(t)}(t)^{\top} S^{-1}|\widetilde{\mu}(t) ]P_{S\widetilde{\mu}(t)}
	&=& \frac{S\widetilde{\mu}(t)\widetilde{\mu}(t)^{\top}S}{\widetilde{\mu}(t)^{\top}S^2\widetilde{\mu}(t)} \mathbb{E}[ S^{-1} \widehat{x}_{a(t)}(t)\widehat{x}_{a(t)}(t)^{\top} S^{-1}|\widetilde{\mu}(t) ] \frac{S\widetilde{\mu}(t)\widetilde{\mu}(t)^{\top}S}{\widetilde{\mu}(t)^{\top}S^2\widetilde{\mu}(t)} \nonumber\\
	&=& \frac{S\widetilde{\mu}(t)}{\widetilde{\mu}(t)^{\top} S^2\widetilde{\mu}(t)} \mathbb{E}[ (\widetilde{\mu}(t)^{\top}  S S^{-1} \widehat{x}_{a(t)}(t))^2|\widetilde{\mu}(t) ]\frac{\widetilde{\mu}(t)^{\top}S}{\widetilde{\mu}(t)^{\top}S^2\widetilde{\mu}(t)} \nonumber\\
	&=& P_{S\widetilde{\mu}(t)} \mathbb{E}\left[ \left. \left( \left(S^{-1} \widehat{x}_{a(t)}(t)\right)^{\top}\overrightarrow{S\widetilde{\mu}(t)}   \right)^2\right|\widetilde{\mu}(t) \right].
	\end{eqnarray}
    Since $\widehat{x}_{i}(t)^{\top} S^{-1}\overrightarrow{S\widetilde{\mu}(t)}$ has a standard normal distribution, we have
	\begin{eqnarray}
	 \mathbb{E}\left[\left. \left(\widehat{x}_{a(t)}^{\top}(t)S^{-1}\overrightarrow{S\widetilde{\mu}(t)} \right)^2\right|\widetilde{\mu}(t)\right] = \mathbb{E}\left[ \left(\underset{1 \leq i \leq N}{max} (\{V_i:V_i \sim N(0,1) \}\right)^2 \right]\label{eq:kn}.
	\end{eqnarray}
	We define the quantity in \eqref{eq:kn} as $k_N$, 
	\begin{eqnarray}
	k_N = \mathbb{E}\left[ \left(\underset{1 \leq i \leq N}{max} (\{V_i:V_i \sim N(0,1) \}\right)^2 \right],
	\end{eqnarray}
	which is greater than 1 and grows as $N$ gets larger, because $\mathbb{E}[V_i^2] = 1 < \mathbb{E}\left[ \left(\underset{1 \leq i \leq N}{max} (\{V_i:V_i \sim N(0,1) \}\right)^2 \right]$.
	Thus, $\mathbb{E}[ S^{-1} \widehat{x}_{a(t)}(t)\widehat{x}_{a(t)}(t)^{\top} S^{-1}|\widetilde{\mu}(t) ]$ can be written as
	\begin{eqnarray}
	\mathbb{E}[ S^{-1} \widehat{x}_{a(t)}(t)\widehat{x}_{a(t)}(t)^{\top} S^{-1}|\widetilde{\mu}(t) ] = P_{S\widetilde{\mu}(t)} k_N + P_{S\widetilde{\mu}(t)^\perp} = P_{S\widetilde{\mu}(t)} (k_N-1) + I_d.\label{eq:matdec}
	\end{eqnarray}
	
	Because the column-spaces of the matrices  $P_{S\widetilde{\mu}(t)}$ and $P_{S\widetilde{\mu}(t)^\perp}$ are orthogonal, the non-zero eigenvalues of $P_{S\widetilde{\mu}(t)} \mathbb{E}[ S^{-1} \widehat{x}_{a(t)}(t)\widehat{x}_{a(t)}(t)^{\top} S^{-1}|\widetilde{\mu}(t)]P_{S\widetilde{\mu}(t)}$ and $P_{S\widetilde{\mu}(t)^\perp}$ are the eigenvalues of $\mathbb{E}[ S^{-1} \widehat{x}_{a(t)}(t)\widehat{x}_{a(t)}(t)^{\top} S^{-1}|\widetilde{\mu}(t)]$. That is, $(d-1)$ eigenvalues of $\mathbb{E}[ S^{-1} \widehat{x}_{a(t)}(t)\widehat{x}_{a(t)}(t)^{\top} S^{-1}|\widetilde{\mu}(t)]$ are $1$, and the other eigenvalue is $k_N$. This means that $\mathbb{E}[ S^{-1} \widehat{x}_{a(t)}(t)\widehat{x}_{a(t)}(t)^{\top} S^{-1}|\widetilde{\mu}(t) ]$ is positive definite, since $k_N>1$.
	
	Next, for the true parameter $\mu_*$, we claim $\lim_{t \rightarrow \infty} \widetilde{\mu}(t) = \mu_*$. With \eqref{eq:emuhat}, \eqref{eq:covmuhat}, and the fact that $\widetilde{\mu}(t)$ is generated from the posterior $N(\widehat{\mu}(t),B(t)^{-1})$, we have
\begin{eqnarray}
\mathbb{E}\left[\widetilde{\mu}(t)\right] &=& \mathbb{E}\left[\mathbb{E}[\widetilde{\mu}(t)|\mathscr{F}_{t-1}]\right] = \mathbb{E}\left[\widehat{\mu}(t)\right]
=(I_d - \mathbb{E}[B(t)^{-1}] \Sigma^{-1})\mu_*,\label{eq:mutilde}\\
\mathrm{Cov}(\widetilde{\mu}(t)) &=&\mathrm{Cov}(\mathbb{E}[\widetilde{\mu}(t)|\mathscr{F}_{t-1}]) + \mathbb{E}[\mathrm{Cov} (\widetilde{\mu}(t)|\mathscr{F}_{t-1})]\nonumber\\
&=&\mathrm{Cov}(\widehat{\mu}(t)) + \mathbb{E}[B(t)^{-1}]\nonumber\\
&=& \mathbb{E}\left[B(t)^{-1} \Sigma^{-1} \mu_*\mu_*^\top \Sigma^{-1} B(t)^{-1} \right]- \mathbb{E} \left[B(t)^{-1} \right] \Sigma^{-1}\mu_*\mu_*^\top \Sigma^{-1}\mathbb{E} \left[B(t)^{-1} \right] \nonumber\\
&+& \mathbb{E}\left[B(t)^{-1}\right]\sigma^2_{ry}- \mathbb{E}\left[B(t)^{-1}\Sigma^{-1} B(t)^{-1}\right]\sigma^2_{ry} +  \mathbb{E}\left[B(t)^{-1}\right] \nonumber\\
&=&\mathbb{E}\left[B(t)^{-1} \Sigma^{-1} \mu_*\mu_*^\top \Sigma^{-1} B(t)^{-1} \right]
- \mathbb{E} \left[B(t)^{-1} \right] \Sigma^{-1}\mu_*\mu_*^\top \Sigma^{-1}\mathbb{E} \left[B(t)^{-1} \right] \nonumber\\
&+& \mathbb{E}\left[B(t)^{-1}\right](\sigma^2_{ry}+1) - \mathbb{E}\left[B(t)^{-1}\Sigma^{-1} B(t)^{-1}\right]\sigma^2_{ry}. \label{eq:covmutil}
\end{eqnarray}

Since $\lim_{t\rightarrow \infty}B(t)^{-1} = 0_{d \times d}$ and thereby $\lim_{t\rightarrow \infty}\mathrm{Cov}(\widetilde{\mu}(t)) = 0_{d \times d}$, $\widetilde{\mu}(t)$ is a consistent estimator of $\mu_*$. That is, 
\begin{eqnarray}
\lim_{t \rightarrow \infty} \widetilde{\mu}(t) = \mu_*\label{eq:limmutilde}.
\end{eqnarray}

 Thus, $\lim_{t \rightarrow \infty}P_{S\widetilde{\mu}(t)} = P_{S\mu_*}$. Using 
	$$\mathbb{E}[ S^{-1} \widehat{x}_{a(t)}(t)\widehat{x}_{a(t)}(t)^{\top} S^{-1}|\mathscr{F}_{t-1}] = \mathbb{E}[\mathbb{E}[ S^{-1} \widehat{x}_{a(t)}(t)\widehat{x}_{a(t)}(t)^{\top} S^{-1}|\widetilde{\mu}(t)]|\mathscr{F}_{t-1}]$$ and \eqref{eq:matdec}, we get
	 \begin{eqnarray}
	 \lim_{t\rightarrow \infty} \mathbb{E}[ S^{-1} \widehat{x}_{a(t)}(t)\widehat{x}_{a(t)}(t)^{\top} S^{-1}|\mathscr{F}_{t-1}] = \lim_{t\rightarrow \infty} \mathbb{E}[ P_{S\widetilde{\mu}(t)}(k_N-1) + I_d |\mathscr{F}_{t-1}] = P_{S\mu_*}(k_N-1) + I_d. \nonumber
	 \end{eqnarray}
	 
	 Because the eigenvalues $P_{S\mu_*}(k_N-1) + I_d$ are $(d-1)$ $1$s and $k_N$, which is greater than 1, $P_{S\mu_*}(k_N-1) + I_d$ is positive definite. Therefore, $\lim_{t \rightarrow \infty} \mathbb{E}[ S^{-1} \widehat{x}_{a(t)}(t)\widehat{x}_{a(t)}(t)^{\top} S^{-1}|\mathscr{F}_{t-1}]$ is positive definite.

\end{proof}

\subsection*{Proof of Corollary 1}

\begin{proof}

Recall $\mathrm{Cov}(\widetilde{\mu}(t))$ in \eqref{eq:covmutil}
\begin{eqnarray}
\mathrm{Cov}(\widetilde{\mu}(t)) &=&\mathbb{E}\left[B(t)^{-1} \Sigma^{-1} \mu_*\mu_*^\top \Sigma^{-1} B(t)^{-1} \right]
- \mathbb{E} \left[B(t)^{-1} \right] \Sigma^{-1}\mu_*\mu_*^\top \Sigma^{-1}\mathbb{E} \left[B(t)^{-1} \right] \nonumber\\
&+& \mathbb{E}\left[B(t)^{-1}\right](\sigma^2_{ry}+1) - \mathbb{E}\left[B(t)^{-1}\Sigma^{-1} B(t)^{-1}\right]\sigma^2_{ry}\nonumber.
\end{eqnarray}

Since $B(t)^{-1} = O(t^{-1})$ by Lemma \ref{lem1} and the other terms are negligible except $\mathbb{E}\left[B(t)^{-1}\right](\sigma^2_{ry}+1)$ in above terms, we have $\mathrm{Cov}(\widetilde{\mu}(t)) = O(t^{-1})$. In addition, we already showed that $\lim_{t\rightarrow \infty} \widetilde{\mu}(t) = \mu_*$ in \eqref{eq:limmutilde}.Therefore, 
$$\lim_{t\rightarrow \infty} \widetilde{\mu}(t) = \mu_*,~~~~\mathrm{Cov}(\widetilde{\mu}(t)) = O(t^{-1}).$$
\end{proof}

\end{document}